\DeclareMathOperator*{\argmax}{arg\,max}
\newcommand{\comment}[1]{}
\newtheorem{proposition}{Proposition}
\newtheorem{remark}{Remark} 	
\newtheorem{definition}{Definition} 	
\begin{document}
\title{Rigid Body Dynamic Simulation with Line and Surface Contact}

\author{Jiayin~Xie,~\IEEEmembership{Student Member,~IEEE,}
        Nilanjan~Chakraborty ,~\IEEEmembership{Member,~IEEE,}
\thanks{Jiayin~Xie and Nilanjan~Chakraborty are with the Department of Mechanical Engineering, State University of New York at Stony Brook, Stony Brook,
NY, 11790 USA. Email: jiayin.xie@stonybrook.edu; nilanjan.chakraborty@stonybrook.edu.}}



\maketitle  
\begin{abstract}
In this paper, we develop a principled method to model line and surface contact with point contact (we call this point, equivalent contact point) that is consistent with physics-based models of surface (line) contact. Assuming that the set of contact points form a convex set, we solve the contact detection and dynamic simulation step simultaneously by formulating the problem as a mixed nonlinear complementarity problem. This allows us to simultaneously compute the equivalent contact point as well as the wrenches (forces and moments) at the equivalent contact point (consistent with the friction model) along with the configuration and velocities of the rigid objects. Furthermore, we prove that the contact constraints of no inter-penetration between the objects is also satisfied. We present a geometrically implicit time-stepping scheme for dynamic simulation for contacts between two bodies with convex contact area, which includes  line contact and surface contact. We prove that for surface and line contact, for any value of the velocity of center of mass of the object, there is a unique solution for contact point and contact wrench that satisfies the discrete-time equations of motion. Simulation examples are shown to demonstrate the validity of our approach and show that with our approach we can seamlessly transition between point, line, and surface contact.

\end{abstract}

\begin{IEEEkeywords}
Dynamic Simulation, Nonlinear Complementarity Problems,  Contact Modeling, Intermittent Contact.
\end{IEEEkeywords}
\section{Introduction}
A fundamental characteristic of  a wide range of robotics problems including robotic grasping, in-hand manipulation~\cite{ChavanR15, MaD11},  non-prehensile manipulation (say, by pushing)~\cite{Lynch1996, Peshkin1988a} is the presence of controlled intermittent contact between the gripper or the robotic end effector with other objects. 
The ability to predict motion of objects undergoing intermittent contact can help in the design of robust grasp strategies, effective part feeder devices~\cite{SongTVP04}, and planning and control algorithms for manipulation. Thus, dynamic modeling and simulation for problems with intermittent unilateral contact is a key problem. 


\begin{figure}
\includegraphics[width=1\columnwidth]{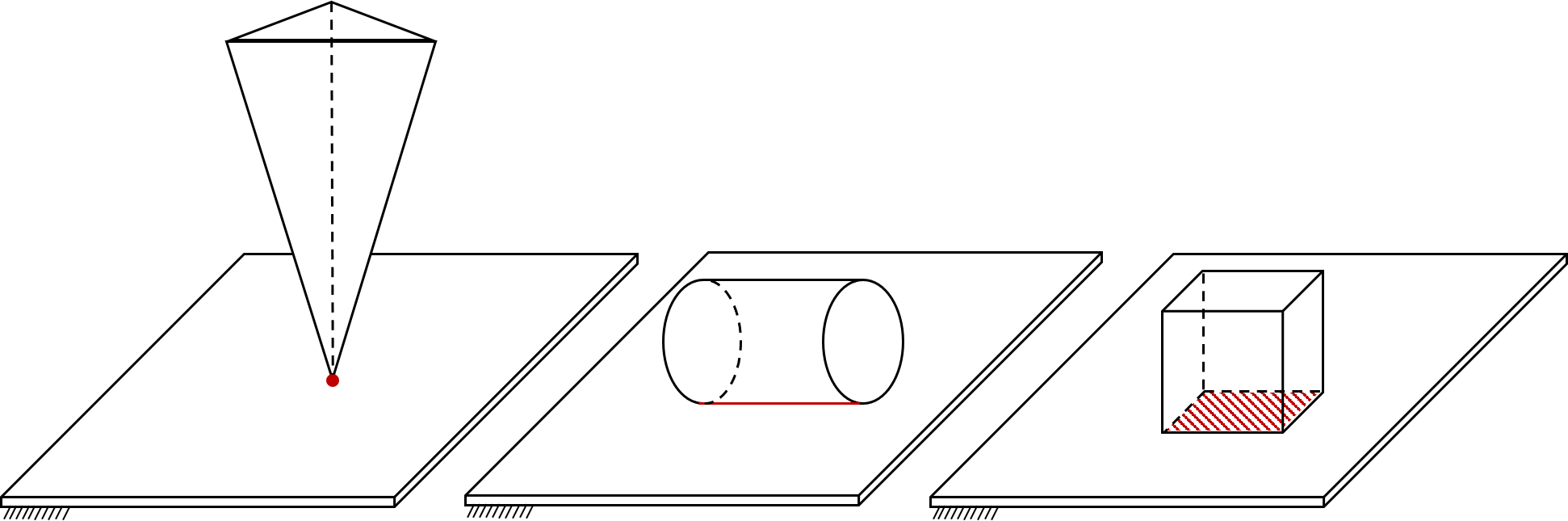}
\caption{State-of-the-art dynamic simulation algorithms assume that the contact between two objects is always a point contact (left). However, the contact  may actually be a (middle) line or (right) surface contact. We develop a principled method to incorporate line and surface contact in rigid body dynamic simulation. }
\label{Figure::Introduction_example}
\end{figure}

Existing mathematical models for motion of
rigid objects with intermittent contact~\cite{PfeifferG1996,LeineN2013,AcaryB2008,PangS2008,Moreau1988,JeanM1987,PaoliS1999} can be classified into two broad categories, namely, (a) Differential Algebraic Equation (DAE) models~\cite{Haug1986} and (b) Differential Complementarity Problem (DCP) models~\cite{Cottle2009,Trinkle1997, PfeifferG08}. In DAE models, it is assumed that the contact mode (i.e., sliding contact, rolling contact, or no contact) is known, whereas DCP models solve for the contact mode along with the state of the system. Irrespective of whether a DAE or DCP is used to model the dynamical system, an almost universal assumption is that the contact between the two objects is a point contact. 

However, in practical manipulation scenarios the point contact assumption may not be valid.

Consider the example of a cylinder being pushed on a flat surface. The contact between the flat surface and the cylinder is a line contact (Figure~\ref{Figure::Introduction_example}, middle) that changes with time as the cylinder rolls. Further, consider a box being pushed on a table. Here, the contact between the box and the table is a surface contact (Figure~\ref{Figure::Introduction_example}, right).

In general, during motion, the contact between two objects may switch among point, line, and surface contact. 

In such cases, multiple contact points (usually less than three) are usually chosen in an ad hoc manner. For example, for the box on the table one can choose any three points, such that the projection of the center of mass of the box on the flat surface lies within the convex hull of the points.  If more than three contact points are chosen (e.g., the four vertices), the force distribution at the points cannot be uniquely determined. 
However, such ad hoc {\em a priori} choices may not be valid because the actual contact patch may change during motion, e.g., when part of the box is over the edge of the table. Also, the contact mode may change from surface to point or line, e.g., when the box is being tilted.

Furthermore, such ad hoc choices (along with other assumptions like linearization of friction cones) can lead to inaccuracies in simulation (please see Figure~\ref{figure:ex2_delta_2} and ~\ref{figure:ex2_delta_3}, which demonstrates this for Open Dynamics Engine (ODE)~\cite{SmithODE} and BULLET~\cite{CouBullet}, two popular dynamic simulation software codes).
Hence, simulation for predicting motion for line or surface contact may not be reliable enough for use in manipulation planning or control.  

A key reason for the  modeling difficulties with line or surface contact is that most current dynamic simulation methods decouple the contact detection from integrating the equations of motion. A collision detection algorithm is used to compute the closest points or the contact points  at a given configuration obtained by integrating the equations of motion. When there is line or surface contact, there are  infinitely many possible pairs of closest points on the two objects and thus the collision detection problem does not have a unique solution. In such cases, it is not clear which point should be used as a contact point to ensure that there is no non-physical penetration and error in the simulation is not introduced by the choice of the point. {\em The goal of this paper is to develop a principled method to incorporate line and surface contact in dynamic simulation}. The set of contact points will often be referred to as {\em contact patch} and we assume that the contact patch is a convex set.

When there is a contact patch between two rigid objects, there is a distribution of the normal force and the friction force in the contact patch. There is a unique point in the contact patch where the net moment due to the normal contact force is zero.  The effect of the contact patch can be modeled equivalently by the sum of the total distributed normal and tangential force acting at this single point and the net moment about this point due to the tangential contact forces. We call this point the {\em equivalent contact point} (ECP).

In statics, where the friction may or may not be relevant, the point in the contact patch where the net moment of the normal force distribution is zero is called the centre of pressure. In the manipulation literature, where the friction is also relevant, this point is called the center of friction~\cite{Mason01}.  In this paper, we show that the ECP as well as the contact force and moment (i.e., contact wrench) at the ECP can be computed by incorporating the collision detection within the dynamic simulation time step. We use a DCP formulation of the dynamics, since it does not make any assumptions about the contact modes, which are usually not known {\em a priori}. In~\cite{NilanjanChakraborty2007}, the authors  presented a method for incorporating the collision detection within the dynamic simulation time step. They showed that such a formulation improves accuracy of the contact dynamics simulation by preventing interpenetration between rigid objects. This method was called the {\em geometrically implicit time-stepping method}. In this paper, we show that the system of equations derived in~\cite{NilanjanChakraborty2007} can give the ECP and equivalent contact wrench for non-point contact. Furthermore, it is guaranteed that there will be no interpenetration between the objects at the end of the time step.  


{\bf Modeling Assumptions}: We make the following modeling assumptions in this paper:
(1) All objects are assumed to be rigid bodies. (2) Geometrically, each object is a convex set modeled as intersection of convex inequalities, and the contact patch is a planar convex set. (3) The net friction wrench (force and moment) at a contact satisfies a generalized Coulomb's friction model. (4) The motion generated is such that the principle of maximum power dissipation holds. We discuss the friction model in detail in Section~\ref{sec:fric}. Note that the geometric assumption of convexity of objects and convexity of contact patch is for convenience of presentation. These assumptions can be relaxed as long as the contact patch is planar (please see~\cite{XieC19}).

{\bf Contributions}: Our key contributions are as follows: (a) We show that the geometrically implicit time-stepping model proposed in~\cite{NilanjanChakraborty2007} can be used to simulate contact problems with line or surface contact. We also formally prove that the geometric contact constraints will be always satisfied and there will be no interpenetration between the objects.
(b) We formalize our claim that our approach of using a geometrically implicit time-stepping model by combining collision detection within the equations of motion leads to a well-posed problem. More specifically, for $3D$ line and surface contact, we prove that for any value of the state (position, orientation, linear, and angular velocity) of the two objects, there is a unique solution for the closest points and the contact wrenches. Thus, for resolving non-point contacts, when we solve for the contact wrenches and contact point simultaneously, the problem is well-posed, whereas it is ill-posed when we want to resolve contacts by solving for the contact points separately (as is traditionally done), since there are infinite number of possible contact points for the given state. 
%
%
(c) We also present numerical simulation results, depicting the correctness of our method by comparing the numerical solution of the dynamic simulation with analytical solutions that we have derived for pure translation with patch contact. For more general motion with non-point contact, we show that our algorithm can track the change of the {\em equivalent contact point} as the contact mode changes from point contact to line contact to surface contact. We use the PATH solver \cite{StevenP.Dirkse1995} to get the numerical solutions to the complementarity problem that we are formulating. 

This paper is organized as follows: In Section~\ref{sec:rw}, we discuss in detail the relationship of our work to the related work. In Section~\ref{sec:dynmod}, we present the mathematical model of the equations of motion of objects in intermittent contact with each other. The modeling of the contact constraints is presented in detail in~\ref{sec:modcc}. In Section~\ref{sec:ecp}, we formally prove that the discrete time-model that we present by combining the equations corresponding to the computation of the closest point within the equations of motion leads to a well-posed problem for determining the ECP and contact wrench. In Section~\ref{sec:res}, we present our numerical simulation results depicting the performance of our algorithm and in Section~\ref{sec:conc}, we present our conclusions and outline avenues of future research.
\section{RELATED WORK}
\label{sec:rw}

In this paper, we are concerned with dynamic simulation of rigid bodies that are in intermittent contact with each other and we use a DCP model.  
Let ${\bf u}\in \mathbb{R}^{n_1}$,  ${\bf v}\in \mathbb{R}^{n_2}$ and let ${\bf g}$ :$ \mathbb{R}^{n_1}\times \mathbb{R}^{n_2} \rightarrow \mathbb{R}^{n_1} $, ${\bf f}$ : $ \mathbb{R}^{n_1}\times \mathbb{R}^{n_2} \rightarrow \mathbb{R}^{n_2}$ be two vector functions and the notation $0 \le {\bf x} \perp {\bf y} \ge 0$ imply that ${\bf x}$ is orthogonal to ${\bf y}$ and each component of the vectors is non-negative. 
\begin{definition}
\cite{Facchinei2007} The differential (or dynamic) complementarity problem is to find ${\bf u}$ and ${\bf v}$ satisfying
$$\dot{{\bf u}} = {\bf g}({\bf u},{\bf v}), \ \ \ 0\le {\bf v} \perp{ \bf f}({\bf u},{\bf v}) \ge 0 $$
\end{definition}
\begin{definition}
The mixed complementarity problem is to find ${\bf u}$ and ${\bf v}$ satisfying
$${\bf g}({\bf u},{\bf v})={\bf 0}, \ \ \ 0\le {\bf v} \perp {\bf f}({\bf u},{\bf v}) \ge 0$$
if the functions ${\bf f}$ and ${\bf g}$ are linear the problem is called a mixed linear complementarity problem (MLCP), otherwise, the problem is called a mixed nonlinear complementarity problem (MNCP). As we will discuss later, our continuous time dynamics model is a DCP whereas our discrete-time dynamics model is a MNCP. 
\end{definition}

Modeling the intermittent contact between bodies in motion as a complementarity constraint was first done by Lotstedt~\cite{Lotstedt82}. Subsequently, there was a substantial amount of effort in modeling and dynamic simulation with complementarity constraints~\cite{AnitescuCP96, Pang1996, StewartT96, Liu2005, DrumwrightS12, Todorov14, AcaryB2008, PfeifferG1996, Studer2009, LeineN2013, capobiancoE2018, BrulsAC2018}. The DCP that models the equations of motion, usually, cannot be solved in closed form. Therefore,  time-stepping schemes~\cite{Studer2009,capobiancoE2018,BrulsAC2018, SchindlerR+2015} have been introduced to solve the DCP. The time-stepping problem is: {\em given the state of the system and applied forces, compute
an approximation of the system one time step into the future.} Repeatedly solving this problem
results in an approximation of the solution of the equations of motion of the system. There are different assumptions that are made when forming the discrete equations of motion that makes the system Mixed Linear Complementarity problem (MLCP)~\cite{AnitescuP97, AnitescuP02} or mixed non-linear complementarity problem (MNCP)~\cite{Tzitzouris01,NilanjanChakraborty2007}. The MLCP problem can be solved faster but it assumes that the friction cone constraints are linearized and that the distance function between two bodies (which is a nonlinear function of the configuration) is also linearized. 

Depending on whether the distance function is approximated, the time-stepping schemes can also be divided into geometrically explicit schemes~\cite{AnitescuCP96, StewartT96} and geometrically implicit schemes~\cite{Tzitzouris01, Laursen2013,Wriggers2004,AcaryB2008}. In geometrically explicit schemes, at the current state, a collision
detection routine is called to determine separation or penetration distances between the bodies,
but this information is not incorporated as a function of the unknown future state at the end of
the current time step. A goal of a typical time-stepping scheme is to guarantee consistency of the
dynamic equations and all model constraints at the end of each time step. However, since the
geometric information is obtained and approximated only at the start of the current time-step,
then the solution will be in error.  Thus we use a geometrically implicit time stepping scheme in this paper and the resulting discrete time problem is a MNCP.

A common aspect of all the time stepping schemes mentioned above is that they assume the contact between the two objects to be a point contact. In many application scenarios in grasping and manipulation, line and surface contact with friction is ubiquitous. Therefore much attempt has been made to model and understand the effect of non-point frictional contact in robotic manipulation~\cite{Erdmann1994, Goyal1991, Howe1996}. Building on these friction models, there is work in robotic manipulation based on quasi-static assumption where the inertial forces can be neglected with respect to the friction forces~\cite{KaoC92, Lynch1996, Peshkin1988, Peshkin1988a}. In this paper we use a friction model for general dynamic simulation, where the quasi-static assumption may not be valid. We want a general model that considers moments due to the distributed friction force; hence we use  a generalization of Coulomb's friction.

\section{Dynamic Model}
\label{sec:dynmod}
We use a geometrically implicit optimization-based time-stepping scheme to model the dynamic simulation of intermittent unilateral contact between two rigid objects where the equations in each time step form a mixed complementarity problem. It is made up of the following parts: (a) Newton-Euler equations (b) kinematic map relating the generalized velocities to the linear and angular velocities (c) friction law and (d) contact constraints incorporating the geometry of the objects. The parts (a), (b), and (c) are standard for any complementarity-based formulation (although part (c) can change depending on the assumptions (or approximations) made about the friction model). Part (d) is the key aspect of the model in the context of this paper and was introduced in~\cite{NilanjanChakraborty2007} for point contacts.

For simplicity of exposition, we will introduce the notations and describe the equations of motion for a single object in a single (convex) patch contact with another object. For multiple moving objects, we can always form the equations of motion of the overall system from the description below along with the fact that the contact wrenches at the contact point are equal and opposite to each other for any two contacting objects. 
Let $\bf{q}$ be the position of the center of mass of the object and the orientation of the object in an inertial frame ($\bf{q}$ can be $6 \times 1$ or $7\times 1$ vector depending on the representation of the orientation). We will use unit quaternion to represent the orientation unless otherwise stated.
Let $\bm{\nu}$ be the generalized velocity concatenating the linear ($\bm{v}$) and spatial angular ($^s\bm{\omega}$) velocities. Let
$\lambda_n$ ($p_n$) be the magnitude of normal contact force (impulse) at the equivalent contact point (ECP), $\lambda_t$ ($p_t$) and $\lambda_o$ ($p_o$) be the orthogonal components of the friction force (impulse) on the tangential plane at ECP, and $\lambda_r$ ($p_r$) be the frictional force (impulse) moment about the contact normal.
\subsection{Newton-Euler Equations}
The  Newton-Euler equations in an inertial frame $\mathcal{F}_w$ are as follows:
\begin{equation} \label{eq1}
{\bf M}({\bf q})
{\dot{\bm{\nu}}} = 
{\bf W}_{n}\lambda_n+
{\bf W}_{t}\lambda_{t}+
{\bf W}_{o} \lambda_{o}+
{\bf W}_{r}\lambda_{r}+
{\bf F}_{app}+{\bf F}_{vp}
\end{equation}
\noindent
where ${\bf M}({\bf q}) = \mathrm{diag}(m\mathbf{I}_3, \mathbf{I}') \in \mathbb{R}^{6 \times 6}$ is the generalized inertia matrix, $m$ is the object mass, $\mathbf{I}' \in \mathbb{R}^{3 \times 3}$ is the object inertia matrix about the CM and with respect to the inertial frame $\mathcal{F}_w$, $\mathbf{I}_3$ is a $3 \times 3$ identity matrix.
Here ${\bf F}_{app}  \in \mathbb{R}^6$ is the vector of external forces and moments (including gravity and excluding the contact normal force and frictional forces/moments), ${\bf F}_{vp}  \in \mathbb{R}^6$ is the vector of Coriolis and centripetal forces and moments, ${\bf W}_{n}$, ${\bf W}_{t}$, ${\bf W}_{o}$ and $
{\bf W}_{r} \in \mathbb{R}^6$ are the unit wrenches of the normal contact forces, frictional contact forces, and frictional moments.
Let the ECP be the origin of the contact frame. Let the normal axis of the contact frame be the unit vector ${\bf n}\in \mathbb{R}^3$, which is orthogonal to the contact patch. The tangential axes of the frame are unit vectors ${\bf t} \in \mathbb{R}^3 $ and ${\bf o}\in \mathbb{R}^3$, which is orthogonal to the normal axis  ${\bf n}$.
Let ${\bf r}$ be the vector from center of gravity to the ECP, expressed in the inertial frame:
\begin{equation}
\begin{aligned}
\label{equation:wrenches}
{\bf W}_{n} =  \left [ \begin{matrix} 
{\bf n}\\
{\bf r}\times {\bf n}
\end{matrix}\right]
\quad 
{\bf W}_{t} =  \left [ \begin{matrix} 
{\bf t}\\
{\bf r}\times {\bf t}
\end{matrix}\right]
\\
{\bf W}_{o} =  \left [ \begin{matrix} 
{\bf o}\\
{\bf r}\times {\bf o}
\end{matrix}\right]
\quad 
{\bf W}_{r} =  \left [ \begin{matrix} 
{\bf 0}\\
\ \ {\bf n} \ \
\end{matrix}\right]
\end{aligned}
\end{equation}
Here $\bf 0$ is a $3 \times 1$ vector with each entry equal to $0$.
\subsection{Kinematic Map}
The kinematic map is given by
\begin{equation}
\label{KM}
{\bf \dot{q}} = {\bf G}({\bf q}) \bm{\nu}
\end{equation}
where $\bf G$ is the matrix mapping the generalized velocity of the body to the time derivative of the position and the orientation ${\bf \dot{q}}$.

\begin{figure}
\centering
\includegraphics[width=3in]{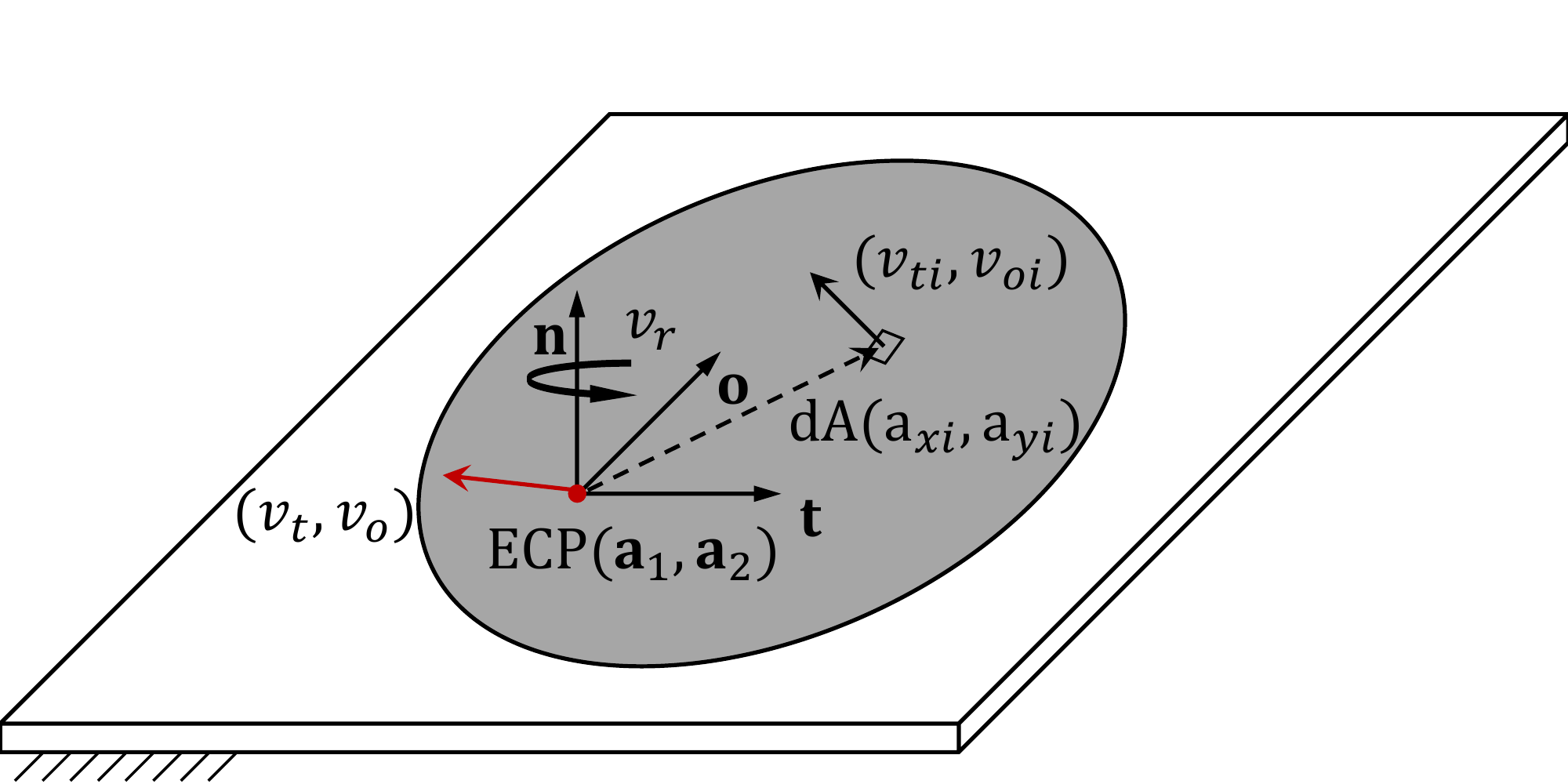}
\caption{Notation for planar contact patch. } 
\label{figure_pressure} 
\end{figure} 

\subsection{Friction Model}
\label{sec:fric}

We use a friction model based on the maximum power dissipation principle, which has been previously proposed in the literature for point contact~\cite{Moreau1988}. The maximum power dissipation principle states that among all the possible contact wrenches (i.e., forces and moments) 
the wrench that maximize the power dissipation at the contact are selected. For non-point contact, we will use a generalization of the maximum power dissipation principle, where, we select contact wrenches and contact velocities that maximize the power dissipation over the entire contact patch. We will now show that the problem formulation using the power loss over the whole contact patch can be reduced to the friction model for point contact with the ECP as the chosen point. Mathematically, the power dissipated over the entire surface, $P_c$ is 
\begin{equation}
\label{eq:power}
P_c = -\int_{A} (v_{ti}\beta_{ti}+v_{oi}\beta_{oi})dA
\end{equation}
where $v_{ti},v_{oi}$ are the sliding velocity at $dA$, $\beta_{ti},\beta_{oi}$ are the frictional force per unit area at $dA$. Since we assume the contact patch to be planar, the contact normal is the same at all points on the contact patch. 
Let $v_t ={\bf W}^{T}_{t}
{\bm \nu} $ and $v_o={\bf W}^{T}_{o}
\bm{\nu}$ be the components of tangential velocities at the ECP. Similarly, the angular velocity about contact normal $v_r ={\bf W}^{T}_{r}
\bm{\nu} $.  From basic kinematics, we know that $v_{ti} = v_t - v_{r}(a_{yi}-a_y)$ and $v_{oi} = v_o + v_{r}(a_{xi}-a_x)$, where ($a_x$, $a_y$) are the $x$ and $y$ coordinates of the ECP and ($a_{xi}$, $a_{yi}$) are the $x$ and $y$ coordinates of a point on the patch. Substituting the above in Equation~\eqref{eq:power} and simplifying, we obtain
\begin{equation}
\label{eq:power2}
P_c = -\left[\int_{A} v_{t}\beta_{ti}dA + \int_{A}v_{o}\beta_{oi}dA +\int_{A}v_{ri}\beta_{ri}^{\prime}dA \right ]
\end{equation}
where $\beta^{\prime}_{ri} =  -\beta_{ti}(a_{yi}-a_y) + \beta_{oi}(a_{xi}-a_x)$.
%
%
By noting that $\int \beta_{ti}dA = \lambda_t, \int \beta_{oi}dA  = \lambda_o, \int \beta^{\prime}_{ri}dA  = \lambda_r$, where $\lambda_t$, $\lambda_o$ are the net tangential forces at the ECP and $\lambda_r$ is the net moment about the axis normal to the contact patch and passing through the ECP, the power dissipation over the entire contact patch is given by $P_c = - (v_t \lambda_t + v_o\lambda_o + v_r \lambda_r)$. 

For specifying a friction model, we also need a law or relationship that bounds the magnitude of the friction forces and moments in terms of the magnitude of the normal force~\cite{Goyal1991}. Here, we use an ellipsoidal model for bounding the magnitude of tangential friction force and friction moment. This friction model has been previously proposed in the literature~\cite{Goyal1991,Moreau1988,XieC16,NilanjanChakraborty2007} and has some experimental justification~\cite{Howe1996}.
Thus, the contact wrench is the solution of the following optimization problem: 
\begin{equation}
\begin{aligned}
\label{equation:friction}
\argmax_{\lambda_t, \lambda_o, \lambda_r} \quad -(v_t \lambda_t + v_o\lambda_o + v_r \lambda_r)\\
{\rm s.t.} \quad \left(\frac{\lambda_t}{e_t}\right)^2 + \left(\frac{\lambda_o}{e_o}\right)^2+\left(\frac{\lambda_r}{e_r}\right)^2 - \mu^2 \lambda_n^2 \le 0
\end{aligned}
\end{equation}
where the magnitude of contact force and moment at the ECP, namely, $\lambda_t$, $\lambda_o$, and $\lambda_r$ are the optimization variables. The parameters, $e_t$, $e_o$, and $e_r$ are positive constants defining the friction ellipsoid and $\mu$ is the coefficient of friction at the contact~\cite{Howe1996,Trinkle1997}.
Thus, we can use the contact wrench at the ECP to model the effect of entire distributed contact patch.
Note that in the derivation above, there is {\em no assumption made on the nature of the pressure distribution between the two surfaces}.


\color{black}
\subsection{Discrete-time Equations}
To discretize the above system of equations, we use a backward Euler time-stepping scheme. Let $t_u$ denote the current time and $h$ be the duration of the time step, the superscript $u$ represents the beginning of the current time and the superscript $u+1$ represents the end of the current time.
Let $\dot{\bm{\nu}} \approx ( {\bm{\nu}}^{u+1} -{\bm{\nu}}^{u} )/h$, and  $\dot{\bf q}\approx( {\bf q}^{u+1} -{\bf q}^{u} )/h$, and write in terms of impulses. The Newton-Euler Equations and kinematic map~\eqref{eq1} and\eqref{KM} become:
\begin{equation}
\begin{aligned}
\label{eq:discrete_dynamics}
&{\bf M}^{u} {\bm{\nu}}^{u+1} = 
{\bf M}^{u}{\bm{\nu}}^{u}+{\bf W}_{n}p^{u+1}_{n}+{\bf W}_{t}p^{u+1}_{t} \\&
+{\bf W}_{o}p^{u+1}_{o}+{\bf W}_{r}p^{u+1}_{r}+{\bf p}^{u}_{app}+{\bf p}^{u+1}_{vp}
\end{aligned}
\end{equation}
\begin{align}
\label{eq:kinematic map}
&{\bf q}^{u+1} ={\bf q}^{u}+h{\bf G}^u \bm{\nu}^{u+1}
\end{align}

Using the Fritz-John optimality conditions of Equation~\eqref{equation:friction}, we can write~\cite{trinkle2001dynamic}:
\begin{equation}
\begin{aligned}
\label{eq:friction}
0&=
e^{2}_{t}\mu p^{u+1}_{n} 
{\bf W}^{T}_{t}
\bm{\nu}^{u+1}+
p^{u+1}_{t}\sigma^{u+1} \\
0&=
e^{2}_{o}\mu p^{u+1}_{n}  
{\bf W}^{T}_{o}
\bm{\nu}^{u+1}+p^{u+1}_{o}\sigma^{u+1} \\
0&=
e^{2}_{r}\mu p^{u+1}_{n}{\bf W}^{T}_{r}
\bm{\nu}^{u+1}+p^{u+1}_{r}\sigma^{u+1}\\
0& \le (\mu p^{u+1}_n)^2- \left(\frac{p^{u+1}_{t}}{e_{t}}\right)^2- \left(\frac{p^{u+1}_{o}}{e_{o}}\right)^2\\&- \left(\frac{p^{u+1}_{r}}{e_{r}}\right)^2 \perp \sigma^{u+1} \ge 0
\end{aligned}
\end{equation}
where $\sigma$ is a Lagrange multiplier corresponding to the inequality constraint in~\eqref{equation:friction}. Note that ${\bf W}_n$, ${\bf W}_t$, ${\bf W}_r$ in Equations~\eqref{eq:discrete_dynamics} and~\eqref{eq:friction} are dependent on the ECP at the end of the time step $u+1$. However, the ECP is not known apriori and changes as the objects move. Therefore, in Section~\ref{sec:modcc}, we provide equations that the ECP must satisfy while ensuring that the rigid body constraints that the two objects cannot interpenetrate are not violated.

\section{Modeling Contact Constraints}
\label{sec:modcc}
In complementarity-based formulation of dynamics, the normal contact constraint for each contact is written as 
\begin{equation} \label{equation:normal contact}
0\le \lambda_{n} \perp \psi_{n}(
{\bf q},t) \ge 0
\end{equation}
where $\psi_{n}(
{\bf q},t)$ is the gap function between two objects, say, $F$ and $G$. When $\psi_{n}({\bf q},t) > 0$, the objects are separate, when $\psi_{n}({\bf q},t) = 0$, the objects touch, when $\psi_{n}({\bf q},t) < 0$, the objects penetrate each other. Since $\psi_{n}({\bf q},t)$ usually has no closed form expression, and the contact constraints should be satisfied at the end of the time step,
state-of-the-art time steppers~\cite{NME:NME1049, Liu2005, Stewart2000, BrulsAC2018,SchindlerR+2015, Studer2009, capobiancoE2018} do the following: (a) use a collision detection algorithm to get the closest point at the beginning of the time-step (b) approximate the distance function at the end of the time step using a first order Taylor's series expansion. Thus, the time-steppers are explicit in the geometric information and the collision detection step is decoupled from the dynamics solution step, where the state of the system and the contact wrenches are computed. In~\cite{NilanjanChakraborty2007}, the authors discussed the limitations of such an approach in terms of undesired interpenetration between rigid objects, and introduced a method whereby the geometry of the bodies are included in the equations of motion, so that simulation with no artificial interpenetration can be guaranteed.

 \begin{figure}
\centering
\includegraphics[width=3in]{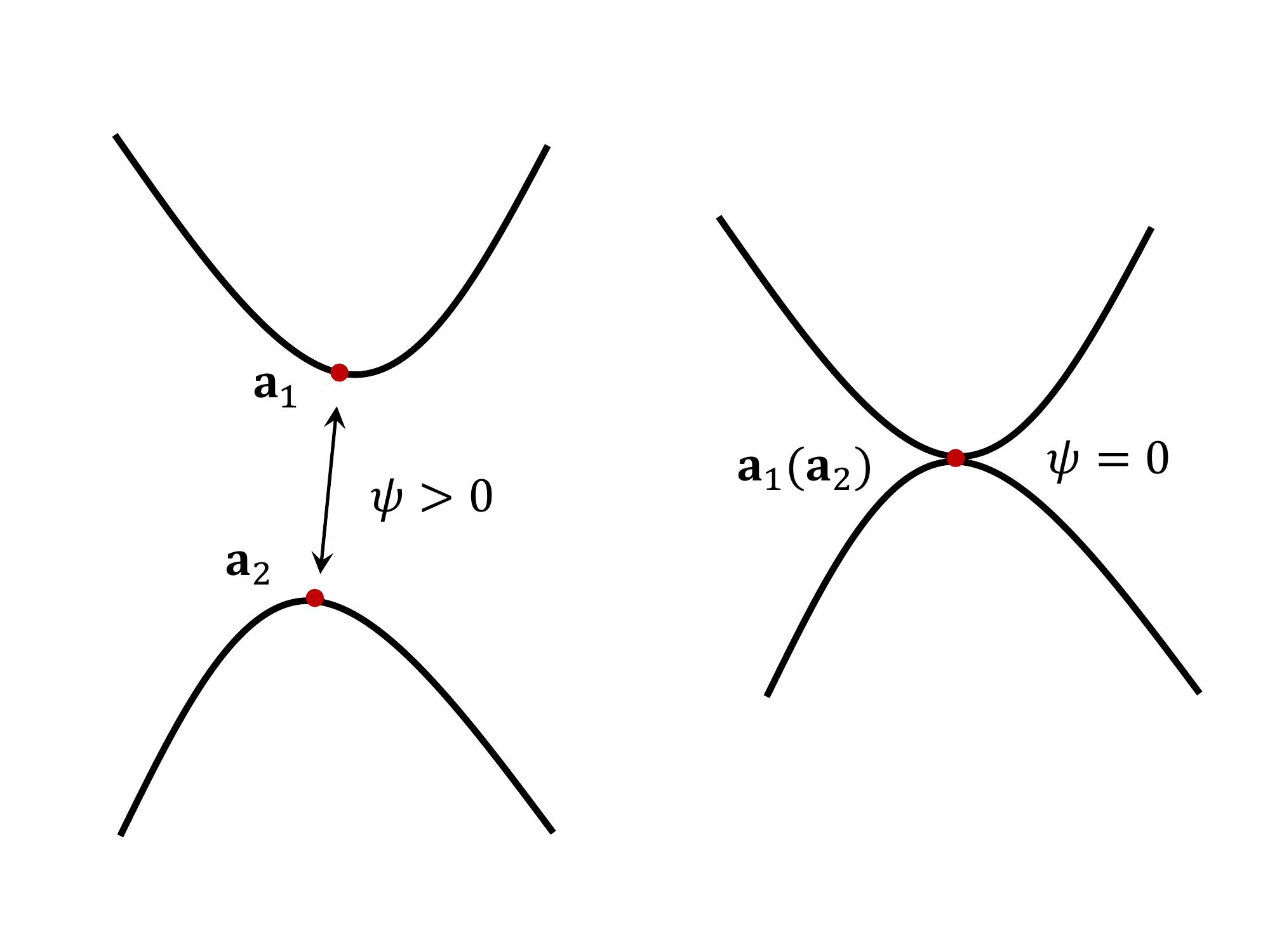}
\caption{Scenario of point contact}
\label{figure:contact_1} 
\end{figure} 

 \begin{figure}
\centering
\includegraphics[width=3in]{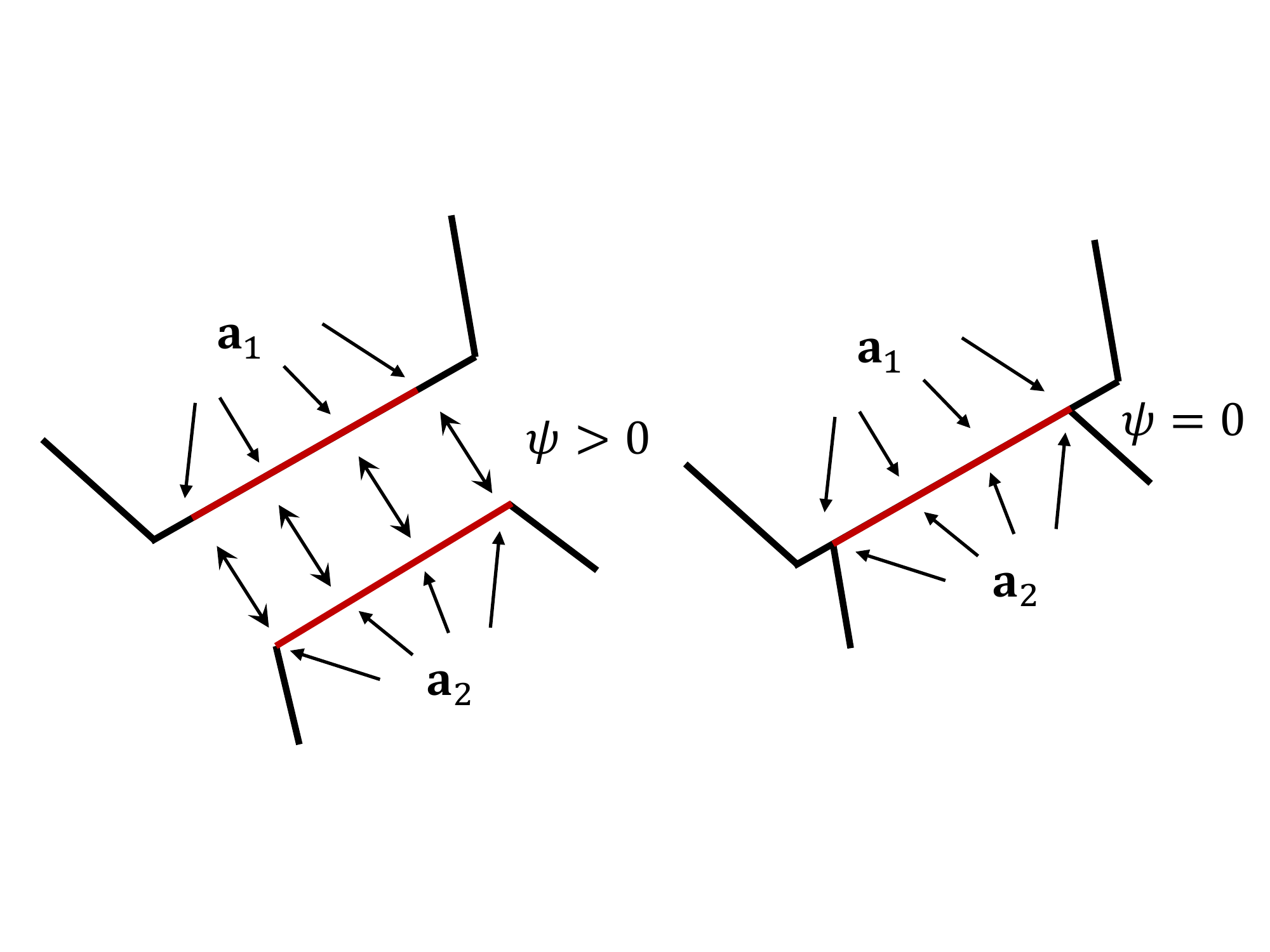}
\caption{Scenario of line contact}
\label{figure:contact_2} 
\end{figure} 

Previous models, including~\cite{NilanjanChakraborty2007}, assume point contact. In Figure \ref{figure:contact_1}, ${\bf a}_1$ and ${\bf a}_2$ are used to represent the closest points on two objects. When $\psi > 0$, the two objects are not in contact, when $\psi = 0$ , the two objects are in contact. In this case, the pair $({\bf a}_1, {\bf a}_2)$ is unique and any collision detection algorithm gives the unique contact information to formulate the dynamic model. However, 
for line contact or surface contact, as shown in Figure \ref{figure:contact_2}, there can be an
infinite number of closest points. Thus, any point on the contact line (or surface) patch is a valid choice of the closest points and there is no way to choose one (see Figure~\ref{figure:contact_nonpenetration_b}, \ref{figure:contact_nonpenetration_c}). In practice, a number of (potential) points on the (potential) contact patch are chosen arbitrarily, and the non-penetration constraint is enforced at each of the discrete points (e.g., in Figure~\ref{figure:contact_nonpenetration_c}, the points chosen may be the four vertices of the face of $F$ on the face of $G$). However, the contact patch may change as the system evolves, in which case arbitrary {\em a priori} selection of contact points can lead to wrong results (e.g., in Figure~\ref{figure:contact_nonpenetration_c}, the vertex labeled, $v_F$ may not be in contact with $G$ after motion and only part of the face of $F$ may be in contact with $G$).  Thus ad hoc selection of points to enforce contact constraints along with other assumptions like linearization of friction cone can lead to inaccuracies in simulation results (see the Example $1$ in the simulation results section).

The guarantee of non-penetration in~\cite{NilanjanChakraborty2007} is valid for any point on the contact surface. From basic physics, we know that there is a unique point on one contact surface that can be used to model the surface (line) contact as point contact where the integral of the total moment (about the point) due to the distributed normal force on the contact patch is zero. We call this closest point the {\em equivalent contact point} (ECP). This equivalent contact point along with the equivalent contact wrench (due to distributed normal force as well as distributed friction force over the contact patch) that acts at this point so that the two objects do not penetrate is unique. So we can potentially use the implicit time-stepping method to solve for the ECP as the closest point on the contact surface, its associated wrench and configurations of the object simultaneously. In the subsequent sections, we show that it is indeed the case and prove that the guarantee of non-penetration in~\cite{NilanjanChakraborty2007} that is valid for point contact between two objects can be extended to line and surface contact.   
\begin{figure*}%
\centering
\subcaptionbox{ \label{figure:contact_nonpenetration_a}}
{\includegraphics[width=0.66\columnwidth]{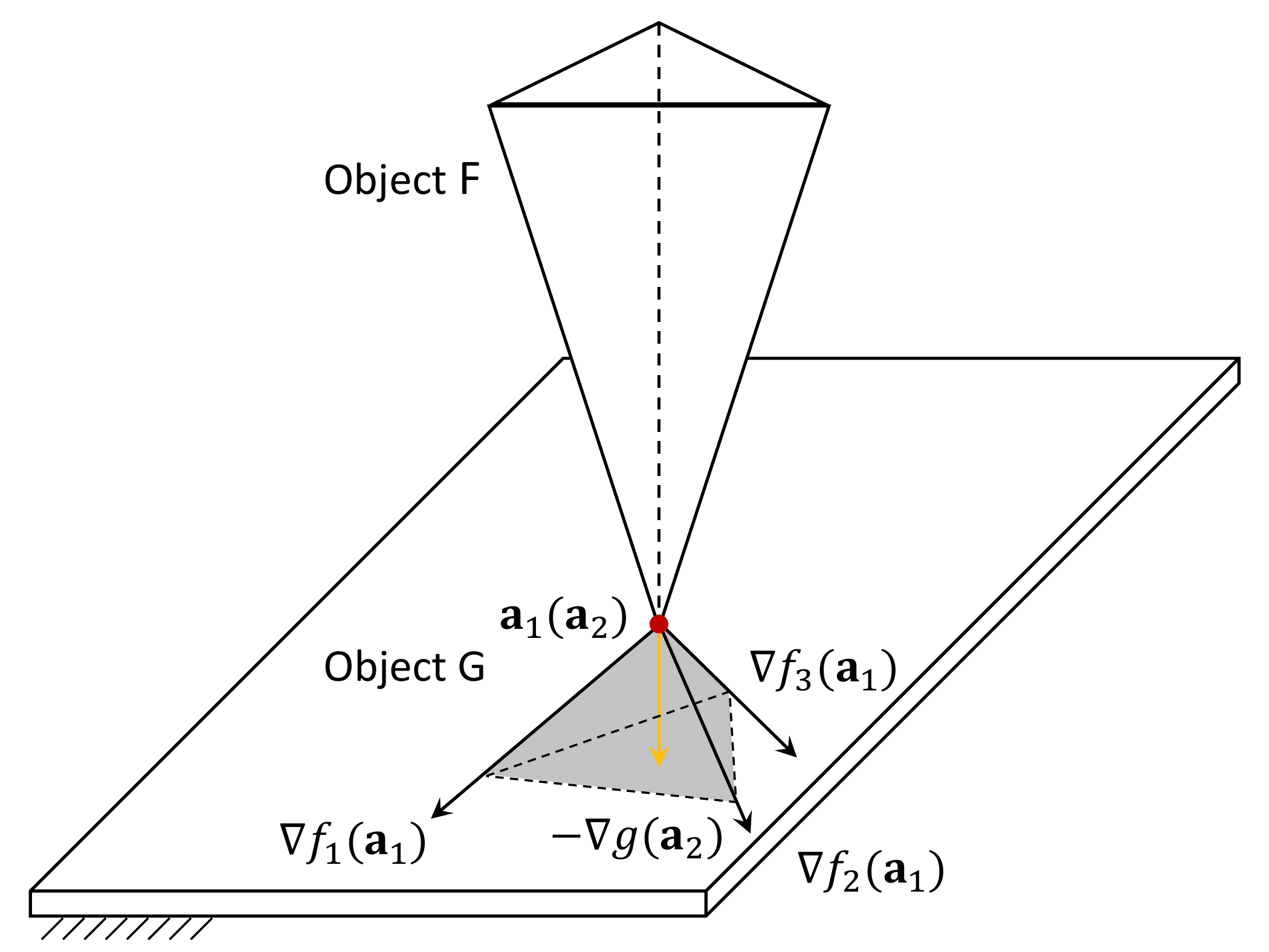}}
\subcaptionbox{ \label{figure:contact_nonpenetration_b}}
{\includegraphics[width=0.66\columnwidth]{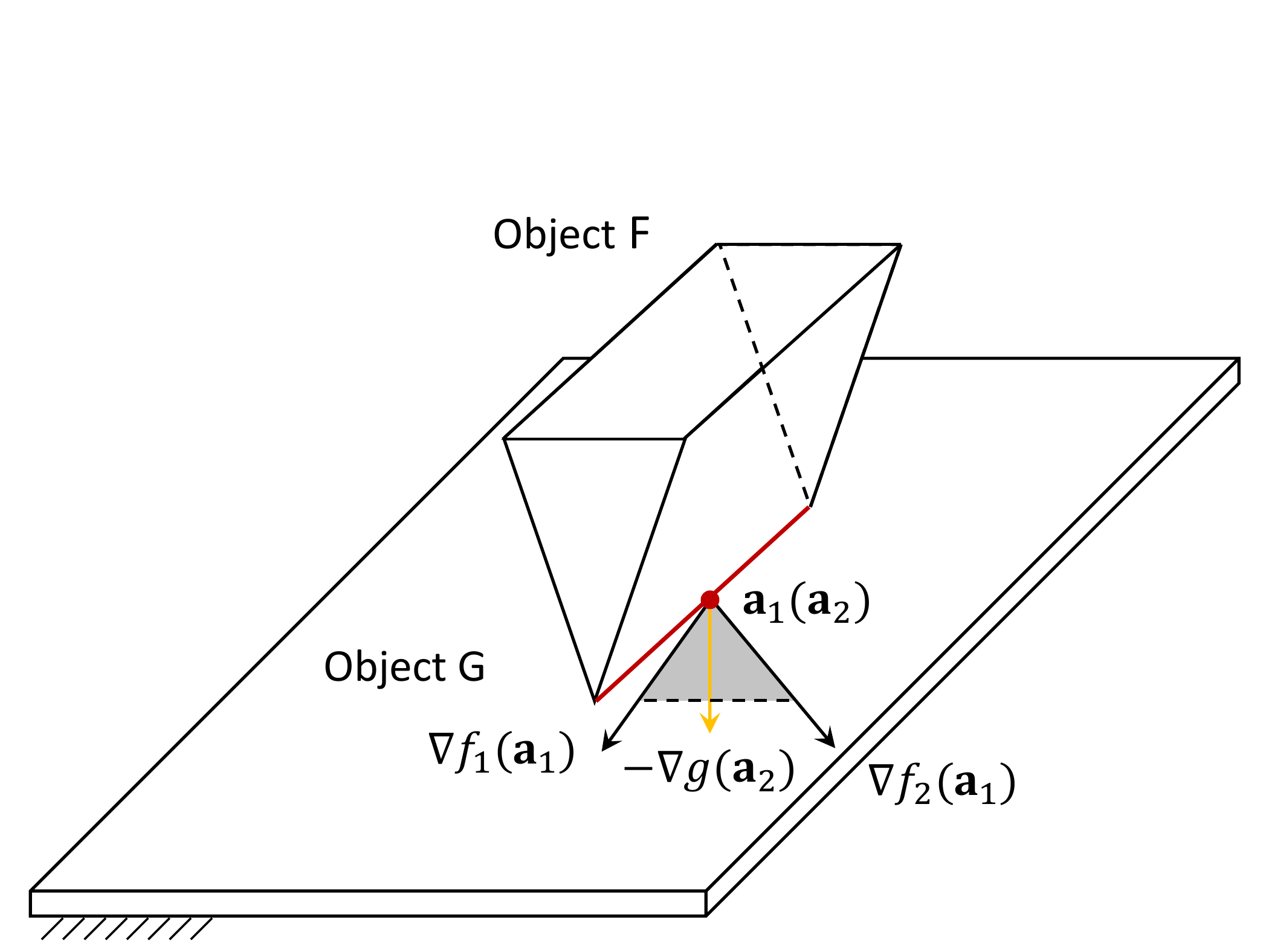}}
\subcaptionbox{ \label{figure:contact_nonpenetration_c}}
{\includegraphics[width=0.66\columnwidth]{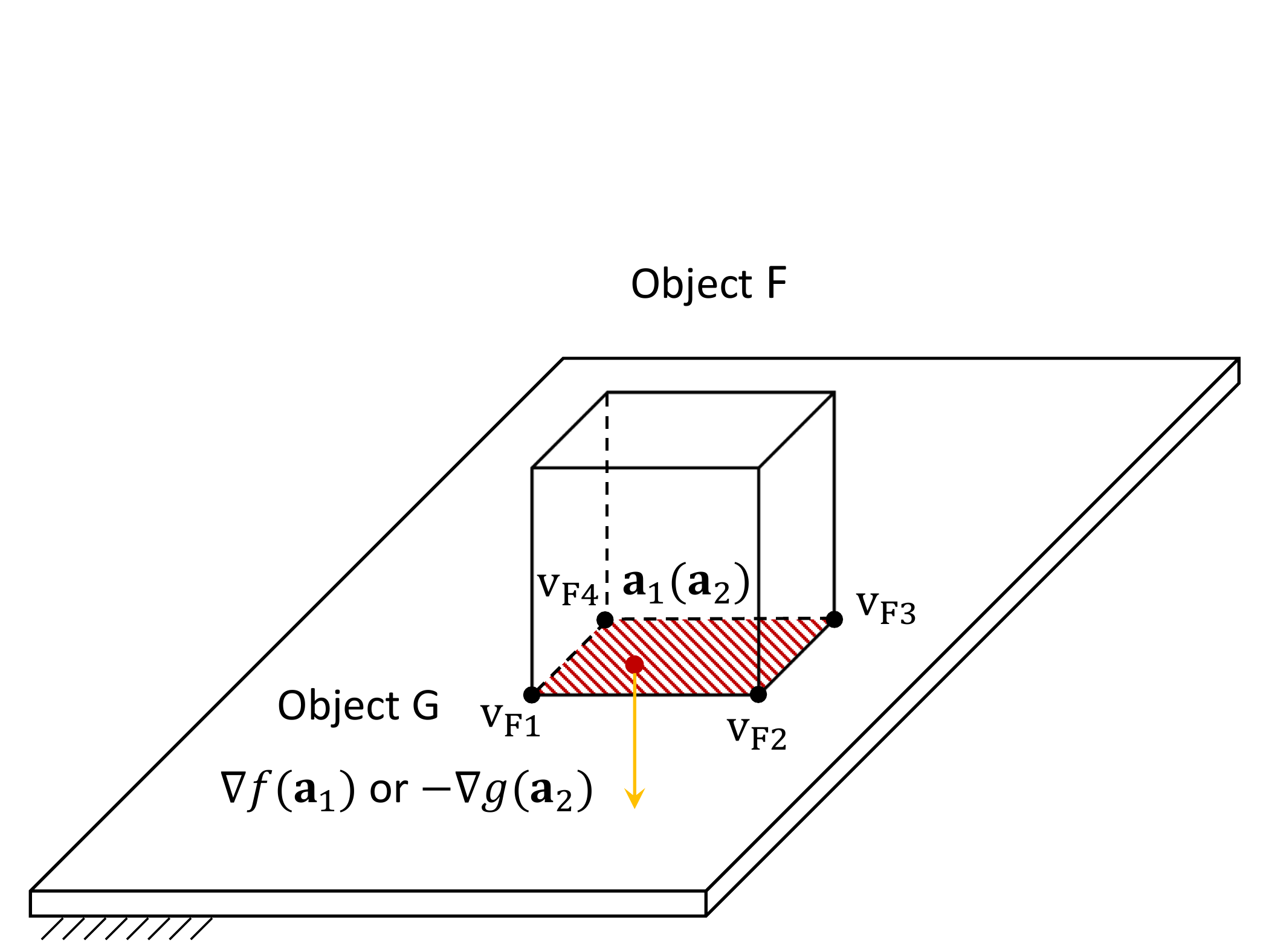}}\\
\subcaptionbox{ \label{figure:contact_nonpenetration_d}}
{\includegraphics[width=0.66\columnwidth]{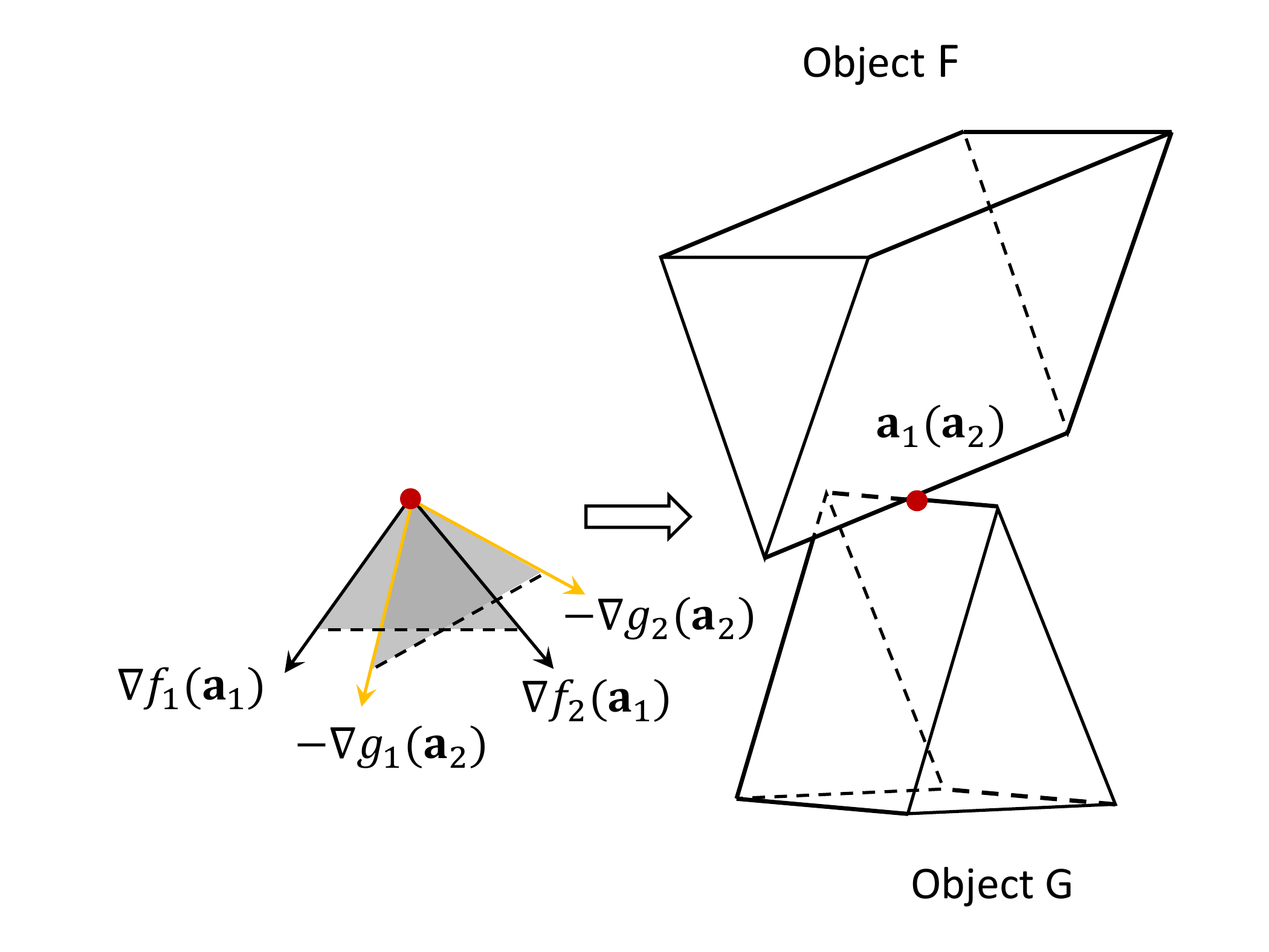}}
\subcaptionbox{ \label{figure:contact_nonpenetration_e}}
{\includegraphics[width=0.65\columnwidth]{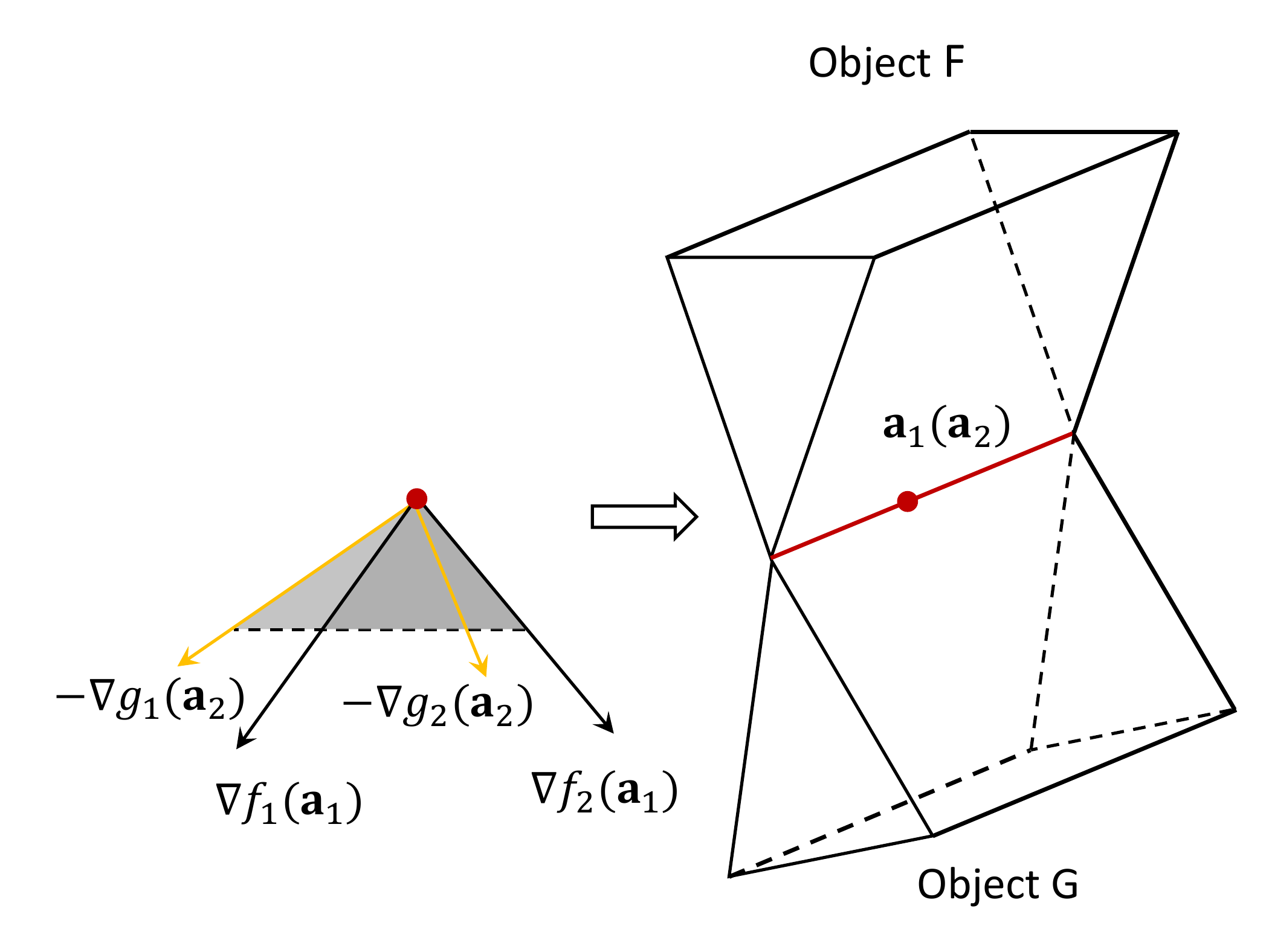}}
\caption{Contact cases:(a) point contact between corner and flat surface, where the supporting hyperplane is uniquely defined by the intersection of pyramidal cone on F and the reversed normal line on $G$. (b) line contact between edge and flat surface, where the supporting hyperplane is uniquely determined by the intersection between planar cone on $F$ and a reversed coplanar normal on $G$. (c) surface contact between two flat surface, where the supporting hyperplane is uniquely determined by two collinear normal lines on $F$ and $G$ (d) point contact between two edges, where the supporting hyperplane is uniquely defined by intersection of two planar cones (e) line contact between two edges, where the supporting hyperplane is determined by two coplanar planar cones, thus the hyperplane is non-unique.  }
\label{figure:contact_nonpenetration_multiple}
\end{figure*}.


\subsection{Algebraic contact constraints based on KKT condition}
Consider two convex objects $F$ and $G$ described by the intersecting convex inequalities. Let $f_i(\bm{\xi}_1) \le 0, i = 1,...,m$ and $g_j(\bm{\xi}_2) \le 0, j=m+1,...,n$, be $\mathcal{C}^2$ convex functions (i.e., twice differentiable with continuous second derivatives) representing two convex objects respectively, where $\bm{\xi}_1$ and $\bm{\xi}_2$ are the coordinates of points on the two objects. Since the closest point is outside the object if it is outside at least one of the intersecting surfaces forming the object, the contact complementarity Equation~\eqref{equation:normal contact} can be written as:
\begin{equation}
\begin{aligned}
\label{equation:contact_multiple_comp}
0 \le p_n \perp \mathop{\rm max}_{i=1,...,m} f_i({\bf a}_2) \ge 0;\\
0 \le p_n \perp \mathop{\rm max}_{j=m+1,...,n}g_j({\bf a}_1) \ge 0
\end{aligned}
\end{equation}
where ${\bf a}_1$ and ${\bf a}_2$ are the closest points on two objects and given by a solution to the following minimization problem:
\begin{equation}
\label{equation:optimazation}
({\bf a}_1,{\bf a}_2) \in {\rm arg} \min_{\bm{\zeta}_1,\bm{\zeta}_2}\{ \|\bm{\zeta}_1-\bm{\zeta}_2 \| \ f_i(\bm{\zeta}_1) \le 0,\ g_j(\bm{\zeta}_2) \le 0 \}
\end{equation}
where $i=1,...,m$ and $j=m+1,...,n$.

Using a slight modification of the Karush-Kuhn-Tucker (KKT) conditions for the optimization problem in Equation~\eqref{equation:optimazation}, the closest points (or ECP) should satisfy the following equations:
\begin{align}
\label{equation:re_contact_multiple_1}
{\bf a}_1-{\bf a}_2 = -l_{k_1}(\nabla f_{k_1}({\bf a}_1)+\sum_{i = 1,i\neq k_1}^m l_i\nabla f_i({\bf a}_1))\\
\label{equation:re_contact_multiple_2}
\nabla f_{k_1}({\bf a}_1)+\sum_{i = 1,i\neq k_1}^m l_i\nabla f_i({\bf a}_1)= -\sum_{j = m+1}^n l_j \nabla g_j ({\bf a}_2)\\
\label{equation:re_contact_multiple_3}
0 \le l_i \perp -f_i({\bf a}_1) \ge 0 \quad i = 1,..,m\\
\label{equation:re_contact_multiple_4}
0 \le l_j \perp -g_j({\bf a}_2) \ge 0 \quad j = m+1,...,n
\end{align}
where $k_1$ represents the index of any one of the active constraints. 
We will also need an additional complementarity constraint (any one of the two equations in~\eqref{equation:contact_multiple_comp}) to prevent penetration.
\begin{align}
\label{equation:re_contact_multiple_5}
0 \le p_n \perp \mathop{\rm max}_{i=1,...,m} f_i({\bf a}_2) \ge 0
\end{align}

Note that Equations~\eqref{equation:re_contact_multiple_1} to \eqref{equation:re_contact_multiple_4} are not exactly the KKT conditions of the optimization problem in Equation~\eqref{equation:optimazation} but can be derived from the KKT conditions. This derivation is presented in detail in~\cite{NilanjanChakraborty2007} and is therefore omitted here.

\begin{definition}
Let $\bf{x}$ be a point that lies on the boundary of a compact set $F$. Let $\mathbb{I}$ be the index set of active constraints for ${\bf x}$, i.e., $\mathbb{I} = \{i | f_i({\bf x}) = 0, \ i = 1, 2, \dots, n \}$.  The normal cone to $F$ at ${\bf x}$, denoted by $\mathcal{C}(F,{\bf x})$, consists of all vectors in the conic hull of the normals to the surfaces (at ${\bf x}$) represented by the active constraints. Mathematically,
$$\mathcal{C}(F,{\bf x}) = \{ {\bf y} \vert  {\bf y} = \sum_{i \in \mathbb{I}} \beta_i \nabla f_i({\bf x}), \beta_i\ge 0\}$$.
\end{definition}

\begin{definition}
\label{def:hyper}
 Let $F$ be a compact convex set and let ${\bf x}_0$ be a point that lies on the boundary of $F$. Let $\mathcal{C}(F,\bf{x}_0)$ be the normal cone of $F$ at ${\bf x}_0$. The supporting plane of $F$ at ${\bf x}_0$ is a plane passing through ${\bf x}_0$ such that all points in $F$ lie on the same side of the plane. In general, there are infinitely many possible supporting planes at a point. In particular any plane
$ \mathcal{H}({\bf x}) = \{ {\bf x}|  \bm{\alpha}^T({\bf x} - {\bf x}_0) = 0\}$ 
 where ${\bm{\alpha}} \in \mathcal{C}(F,\bf{x}_0)$ is a supporting plane to $F$ at ${\bf x}_0$.
\end{definition}

\begin{definition}
The touching solution between two objects $F$ and $G$ is for ECPs ${\bf a}_1$ and ${\bf a}_2$  satisfying:
\begin{enumerate}
\item The points ${\bf a}_1$ and ${\bf a}_2$ that satisfy Equations~\eqref{equation:re_contact_multiple_1} to \eqref{equation:re_contact_multiple_5} lie on the boundary of objects $F$ and $G$ respectively. 
\item The objects can not intersect with other.
\end{enumerate}
\end{definition}

\begin{proposition}{Let objects F and G be two nonempty convex sets. Using Equations~\eqref{equation:re_contact_multiple_1} to~\eqref{equation:re_contact_multiple_5} to model line or surface contact between F and G, if the distance between objects is non-negative, then we get the solution for ECPs as the closest points on the boundary of two objects. Moreover, if the distance is zero, then we get only touching solution. }
\end{proposition}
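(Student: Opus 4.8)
The plan is to read Equations~\eqref{equation:re_contact_multiple_1}--\eqref{equation:re_contact_multiple_4} as a reformulation of the Karush--Kuhn--Tucker (KKT) conditions of the convex program~\eqref{equation:optimazation}, and then exploit convexity to convert these stationarity/complementarity conditions into a statement about global minimizers. Since $\|\bm{\zeta}_1-\bm{\zeta}_2\|$ is convex and the feasible set $\{f_i(\bm{\zeta}_1)\le 0,\ g_j(\bm{\zeta}_2)\le 0\}$ is convex with nonempty interior (so Slater's condition holds), every KKT point of~\eqref{equation:optimazation} is a global minimizer, i.e.\ a pair of closest points, and the attained value $\|{\bf a}_1-{\bf a}_2\|$ equals the distance between $F$ and $G$. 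Thus any solution of~\eqref{equation:re_contact_multiple_1}--\eqref{equation:re_contact_multiple_5} already yields a closest-point pair; what remains is to place ${\bf a}_1,{\bf a}_2$ on the two boundaries and, when the distance is zero, to certify that the solution is a genuine touching (non-penetrating) one.

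First I would dispose of the strictly positive distance case. Here ${\bf a}_1\neq{\bf a}_2$, so the left-hand side of~\eqref{equation:re_contact_multiple_1} is nonzero; hence $l_{k_1}>0$ and, by~\eqref{equation:re_contact_multiple_3}, $f_{k_1}({\bf a}_1)=0$, placing ${\bf a}_1$ on $\partial F$. The same nonvanishing forces the common normal vector in~\eqref{equation:re_contact_multiple_2} to be nonzero, so its right-hand side $-\sum_j l_j\nabla g_j({\bf a}_2)$ is nonzero, some $l_j>0$, and~\eqref{equation:re_contact_multiple_4} gives $g_j({\bf a}_2)=0$, placing ${\bf a}_2$ on $\partial G$. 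Combined with the minimizer property, this is exactly the statement that ${\bf a}_1,{\bf a}_2$ are the closest points on the two boundaries, with ${\bf a}_1-{\bf a}_2$ directed along the anti-parallel outward normals.

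The substantive case is zero distance, where ${\bf a}_1={\bf a}_2=:{\bf a}$ and the norm objective is nondifferentiable, so the smooth stationarity argument is unavailable and I must work directly with the complementarity system. Constraints~\eqref{equation:re_contact_multiple_3}--\eqref{equation:re_contact_multiple_4} already give ${\bf a}\in F\cap G$. The role of the extra constraint~\eqref{equation:re_contact_multiple_5} is to rule out penetration: its feasibility half $\max_i f_i({\bf a}_2)\ge 0$, combined with ${\bf a}={\bf a}_1\in F$ (so $\max_i f_i({\bf a})\le 0$), forces $\max_i f_i({\bf a})=0$, i.e.\ ${\bf a}\in\partial F$. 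To put ${\bf a}$ on $\partial G$ as well, I would argue by contradiction: if ${\bf a}\in\operatorname{int}G$ then all $g_j({\bf a})<0$, so~\eqref{equation:re_contact_multiple_4} forces every $l_j=0$, the right-hand side of~\eqref{equation:re_contact_multiple_2} vanishes, and $\nabla f_{k_1}({\bf a})+\sum_{i\neq k_1}l_i\nabla f_i({\bf a})={\bf 0}$. Because $F$ is convex with nonempty interior, the outward normals of its active constraints at ${\bf a}$ all have strictly negative inner product with any inward direction ${\bf d}$; pairing the previous identity with ${\bf d}$ yields a sum of nonpositive terms, with the coefficient-one $k_1$ term strictly negative, equal to zero, which is impossible. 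Hence ${\bf a}\in\partial G$, and the anti-parallel normals supplied by~\eqref{equation:re_contact_multiple_2} define a common supporting plane (Definition~\ref{def:hyper}) separating the interiors of $F$ and $G$, so the bodies do not interpenetrate. This is precisely the touching solution, and since~\eqref{equation:re_contact_multiple_5} excludes every interior (penetrating) coincident point, the touching solution is the \emph{only} admissible one, giving the ``moreover'' claim.

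I expect the main obstacle to be exactly this zero-distance analysis. The difficulty is twofold: the objective of~\eqref{equation:optimazation} loses differentiability at coincident points, so one cannot merely quote KKT sufficiency, and one must verify that~\eqref{equation:re_contact_multiple_5} eliminates \emph{every} penetrating candidate rather than a single one. Both points are handled by the pointedness of the normal cone of a convex body (the inner-product contradiction above) together with the structural role of~\eqref{equation:re_contact_multiple_5}; the remainder is the standard convex-duality fact that stationary points of a convex distance program are global closest points.
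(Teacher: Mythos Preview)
Your proposal is correct and follows essentially the same architecture as the paper's own proof: invoke KKT sufficiency for the strictly separated case, and in the zero-distance case use~\eqref{equation:re_contact_multiple_5} to force ${\bf a}\in\partial F$, use a contradiction via~\eqref{equation:re_contact_multiple_2} with all $l_j=0$ to force ${\bf a}\in\partial G$, and finish with a supporting-hyperplane separation argument. The one substantive difference is that the paper simply \emph{asserts} that the left-hand side of~\eqref{equation:re_contact_multiple_2} is nonzero, whereas you actually justify it via the pointedness of the normal cone (pairing with an inward direction $d$ and using convexity to get $\nabla f_{k_1}({\bf a})\cdot d\le f_{k_1}(x_0)<0$); in this respect your argument is slightly more complete than the paper's. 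Your boundary argument for ${\bf a}\in\partial F$ is also marginally cleaner---you read it off directly from the two inequalities $\max_i f_i({\bf a})\ge 0$ and ${\bf a}\in F$, while the paper detours through~\eqref{equation:re_contact_multiple_1} and the vanishing of the $l_i$---but the content is the same.
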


\begin{proof}
First, when objects are separate, Equations~\eqref{equation:re_contact_multiple_1} $\sim$~\eqref{equation:re_contact_multiple_5} will give us the solution for ${\bf a}_1$ and ${\bf a}_2$ as the closet points on the boundary of object\footnote{Although the closest points may be non-unique, we can just pick any point. This does not affect our dynamics since the contact wrench is $0$.}. The proof is same as in ~\cite{NilanjanChakraborty2007}. 

When the distance between two objects is zero, the modified KKT conditions~\eqref{equation:re_contact_multiple_1} to~\eqref{equation:re_contact_multiple_4} will give us the optimal solution for the minimization problem (Equation~\eqref{equation:optimazation}), i.e., ${\bf a}_1 = {\bf a}_2$. Furthermore, Equations~\eqref{equation:re_contact_multiple_1} to~\eqref{equation:re_contact_multiple_4} and Equation~\eqref{equation:re_contact_multiple_5} together give us the solution for ${\bf a}_1$ and ${\bf a}_2$ as the touching solution.

By the definition of the touching solution, ECP ${\bf a}_1$ and ${\bf a}_2$ should lie on the boundary of objects $F$ and $G$ respectively. We prove it by contradiction. If ${\bf a}_1$ lies within the interior of object $F$, then from Equation~\eqref{equation:re_contact_multiple_3}, $f_i({\bf a}_1) < 0,\  l_i = 0, \ \forall i = 1,...,m$. From Equation~\eqref{equation:re_contact_multiple_1}, ${\bf a}_1 = {\bf a}_2$, thus $f_i({\bf a}_2) < 0, \ \forall i = 1,...,m$, which contradicts with Equation~\eqref{equation:re_contact_multiple_5}. Thus ${\bf a}_1$ has to lie on the boundary of object $F$. If ${\bf a}_2$ lies within object $G$, from Equation~\eqref{equation:re_contact_multiple_4}, $g_j({\bf a}_2) < 0$, $l_j = 0,  \ \forall j = m+1,...,n$. Thus, $\sum_{j = m+1}^n l_j \nabla g_j ({\bf a}_2) = 0$. Since the left hand side of Equation~\eqref{equation:re_contact_multiple_2} is nonzero, this leads to a contradiction. Thus ${\bf a}_2$ lies on the boundary of object $G$.

We will now prove that the interior of $F$ and $G$ are disjoint. We prove it based on the supporting hyperplane theorem. Let $\mathcal{H}$ be the supporting plane to $F$ at the point ${\bf{a}_1} \in \partial F$, where the normal ${\bm \alpha} \in \mathcal{C}(\bf{F}_i,\bf{a}_{1}) $. 
The supporting plane is given by $\mathcal{H} = \{{\bf x} | \bm{ \alpha}^T({\bf x}-{\bf a}_{1})= 0 \}$. Since the plane $\mathcal{H}$ supports $ F$ at ${\bf a}_{1}$,  for all points ${\bf x} \in  F$, the affine function $\bm{ \alpha}^T({\bf x}-{\bf a}_{1})\le 0$. In other words, the halfspace $\{{\bf x} | \bm{\alpha}^T({\bf x}-{\bf a}_{1})\le 0 \}$ contains $F$.  Now we need to prove that the halfspace $\{{\bf x} | \bm{\alpha}^T({\bf x}-{\bf a}_{1}) \ge 0 \}$ contains object $G$, which would imply that objects $F$ and $G$ can be separated by $\mathcal{H}$. For point ${\bf a}_2 \in \partial G$, since ${\bf a}_1 = {\bf a}_2$, ${\bf a}_2$ lies in $\mathcal{H}$. For other points ${\bf y} \in \{G \setminus {\bf a}_2\}$, we have $\bm{\alpha}^T({\bf y}-{\bf a}_{1}) = \bm{ \alpha}^T({\bf y}- {\bf a}_{2}+{\bf a}_{2} -{\bf a}_{1}) = \bm{ \alpha}^T({\bf y}-{\bf a}_{2})$. From Equation~\eqref{equation:re_contact_multiple_2}, the direction of normal $\bm{\alpha}$ is  opposite to the normal cone of $G$ at ${\bf a}_2$. Since object $G$ is convex, the projection of the vector ${\bf y}-{\bf a}_{2}$ onto the normal cone at ${\bf a}_2$ is always non-positive. Therefore, the function ${\bm \alpha}^T({\bf y}-{\bf a}_{2})$ is always non-negative. Thus, the halfspace $\{{\bf x} | {\bm \alpha}^T({\bf x}-\bf{a}_{1})\ge 0 \}$ contains object $G$. Thus, we can conclude that the interior of $F$ and $G$ are disjoint.


\end{proof}


\comment
{\textcolor{red}{What is the purpose of this remark?}
\begin{remark}
As shown in figure~\ref{figure:contact_nonpenetration_multiple}(a), When convex cone $\mathcal{C}(\bm{F},\bm{a}_1) $ contains more than three gradient of active constraints, object $F$ has point contact with object $G$.
As shown in figure ~\ref{figure:contact_nonpenetration_multiple}(b), when convex cone $\mathcal{C}({\bf F},{\bf a}_1) $ contains two gradient of active constraints and convex cone $\mathcal{C}({\bf G},{\bf a}_2) $ contain one normal vector, object $F$ has line contact with object $G$. 
As shown in figure~\ref{figure:contact_nonpenetration_multiple}(c), when both convex cones of object $F$ and $G$ contains one gradient of active constraint, object $F$ has surface contact with object $G$. 
As shown in figure~\ref{figure:contact_nonpenetration_multiple}(d), when both convex cones of object $F$ and $G$ contains two gradient of active constraints, in general case, object $F$ has point contact with object $G$. However, as shown in figure~\ref{figure:contact_nonpenetration_multiple}(e), there can be a special case that object $F$ has line contact with object $G$.
\end{remark}
}


\subsection{Summary of the Dynamic Model} As stated earlier, the full dynamic model consists of the Newton-Euler equations of motion (Equation~\eqref{eq:discrete_dynamics}), the kinematic map between the rigid body generalized velocity and the rate of change of the parameters for representing position and orientation (Equation~\eqref{eq:kinematic map}), the friction model that gives the constraints that the contact wrenches should satisfy (Equation~\eqref{eq:friction}), and the contact model that gives the constraints that the equivalent contact point should satisfy for ensuring no penetration between the objects (Equations~\eqref{equation:re_contact_multiple_1} to \eqref{equation:re_contact_multiple_5}). Note that ${\bf W}_n$, ${\bf W}_t$, ${\bf W}_r$ in Equations~\eqref{eq:discrete_dynamics} and~\eqref{eq:friction} are dependent on the ECP at the end of the time step $u+1$. Furthermore, the unknown impulses, the unknown Lagrange multipliers $l_i$ and the unknown ECP in Equations~\eqref{equation:re_contact_multiple_1} to \eqref{equation:re_contact_multiple_5} are at time $u+1$. Thus, we have a coupled system of algebraic and complementarity equations that we have to solve.

The mixed nonlinear complementarity problem (MNCP) that we solve at each time step consists of Equations~\eqref{eq:discrete_dynamics}, ~\eqref{eq:friction}, and~\eqref{equation:re_contact_multiple_1} to~\eqref{equation:re_contact_multiple_5}. The solution of the MNCP for each time step gives us the linear and angular velocity ${\bm{\nu}}^{u+1}$, contact impulses $p_n^{u+1}, p_t^{u+1}, p_o^{u+1}, p_r^{u+1}$, and equivalent contact points ${\bf a}_1^{u+1}, {\bf a}_2^{u+1}$, at the end of the time-step. The position and orientation of the object is obtained from Equation~\eqref{eq:kinematic map}, after we obtain ${\bm{\nu}}^{u+1}$. 
We define vectors ${\bf l}_1$ and ${\bf l}_2$ as the concatenated vectors of $l^{u+1}_i$ and $l^{u+1}_j$ respectively. The vector function ${\bf f}({\bf a}^{u+1}_1)$ and ${\bf g}({\bf a}^{u+1}_2)$ represent concatenated vectors of functions defining objects $F$ and $G$ ( $f_i({\bf a}^{u+1}_1)$ and $g_j({\bf a}^{u+1}_2)$) respectively.  The vector of unknowns, ${\bf z} = [{\bf z}_u;{\bf z}_v]$ where the vector for unknowns of equality constraints is ${\bf z}_u = [\bm{\nu}^{u+1}; {\bf a}_1^{u+1}; {\bf a}_2^{u+1}; p_t^{u+1}; p_o^{u+1}; p_r^{u+1}]$ and the vector for unknowns of complementary constraints is ${\bf z}_v = [{\bf l}_1;{\bf l}_2;p_n^{u+1};\sigma^{u+1}]$.
The equality constraints in the mixed NCP are:
\begin{equation}
\begin{aligned}
0 &= -{\bf M}^{u} {\bm{\nu}}^{u+1} +
{\bf M}^{u}{\bm{\nu}}^{u}+{\bf W}_{n}^{u+1}p^{u+1}_{n}+{\bf W}_{t}^{u+1}p^{u+1}_{t} \\&
+{\bf W}_{o}^{u+1}p^{u+1}_{o}+{\bf W}_{r}^{u+1}p^{u+1}_{r}+{\bf p}^{u}_{app}+{\bf p}^{u}_{vp}\\
0& = {\bf a}^{u+1}_1-{\bf a}^{u+1}_2\\&+l^{u+1}_{k_1} \left( \nabla f_{k_1}({\bf a}^{u+1}_1)+\sum_{i = 1,i\neq k_1}^m l^{u+1}_i\nabla f_i({\bf a}^{u+1}_1)\right)\\
0&= \nabla f_{k_1}({\bf a}^{u+1}_1)\\&+\sum_{i = 1,i\neq k_1}^m l^{u+1}_i\nabla f_i({\bf a}^{u+1}_1)+\sum_{j = m+1}^n l^{u+1}_j \nabla g_j ({\bf a}^{u+1}_2)\\
0&=
e^{2}_{t}\mu p^{u+1}_{n} 
({\bf W}^T_{t})^{u+1}
\bm{\nu}^{u+1}+
p^{u+1}_{t}\sigma^{u+1}\\
0&=
e^{2}_{o}\mu p^{u+1}_{n}  
({\bf W}^{T}_{o})^{u+1}
\bm{\nu}^{u+1}+p^{u+1}_{o}\sigma^{u+1}\\
0&=
e^{2}_{r}\mu p^{u+1}_{n}({\bf W}^{T}_{r})^{u+1}
\bm{\nu}^{u+1}+p^{u+1}_{r}\sigma^{u+1}
\end{aligned}
\end{equation}

The complementary constraints for $\bm{z}_v$ are:
\begin{equation}
\begin{aligned}
0 \le \left[\begin{matrix}
{\bf l}_1\\
{\bf l}_2\\
p^{u+1}_n\\
\sigma^{u+1}
\end{matrix}\right]\perp 
\left[\begin{matrix}
-{\bf f}({\bf a}^{u+1}_1)\\
-{\bf g}({\bf a}^{u+1}_2)\\
\max\limits_{i=1,...,m} f_i({\bf a}^{u+1}_2) \\
\zeta
\end{matrix}\right] \ge 0
\end{aligned}    
\end{equation}
where $\zeta =(\mu p^{u+1}_n)^2- (p^{u+1}_{t}/e_{t})^2- (p^{u+1}_{o}/e_{o})^2- (p^{u+1}_{r}/e_{r})^2 $. Note that, we assume the contact between two bodies is a single convex contact patch. Therefore, vector function ${\bf f}({\bf x})$, which describes the boundary of object $F$, can be simplified as a single function.

Although, for simplicity of exposition, we have written the equations for one object with a single (potentially non-point) contact with another object, they can be generalized to multiple bodies by concatenating the system of equations for each body. 

\begin{remark}
Please note that we are not making any claims about computing the force distribution over the contact patch. Our claim is that we do not need to know the pressure distribution over the contact patch to compute the dynamics given that the assumptions of rigid body is valid.
Force balance and moment balance equations allow us to solve for the state, ECP, and contact impulses simultaneously without knowing the force distribution, provided, we add additional equations that model the non-penetration constraints of two rigid bodies.
\end{remark}


\section{Computing ECP and Equivalent Contact Wrench is Well-posed}
\label{sec:ecp}

In this section, we show that by solving for the ECP along with the numerical integration of the equations of motion, i.e., by solving a time-stepping problem where the contact constraints (Equations~\eqref{equation:re_contact_multiple_1} to \eqref{equation:re_contact_multiple_5}) are embedded with the dynamic time-step equations and friction model, computing ECP and contact wrench becomes a well-posed problem. Embedding the contact constraints within the dynamic time-stepping scheme implies that there is no separate call to a collision detection software, i.e., both the collision detection and integration of equations of motion are done simultaneously. 
More formally, we prove that { \em given the state of the object at the end of the time-step, the ECP and net contact wrench is unique}. This provides another theoretical justification for embedding the collision detection problem within the dynamic time-step, since if the two are separate, then for patch contacts, the collision detection problems are always ill-posed.

Consider two objects $F$ and $G$, and without loss of generality, we assume $G$ is fixed. Let ${\bf a}_1$ and ${\bf a}_2$ be the position vector of ECP of objects $F$ and $G$, respectively. When $F$ and $G$ are not in contact, since the contact wrench is zero, even if there are multiple pair of closest points, any pair of points that satisfy the equations is a valid solution for ECP (and will not affect the dynamics), so we ignore this case. 
\comment{
As Figure~\ref{figure:contact_nonpenetration_multiple} illustrates, when objects are in contact, any pair of points on the two objects that are in the contact patch and have identical coordinates in the world frame satisfy the contact constraints. Thus, there exists multiple possible solutions for the closest points ${\bf a}_1$ and ${\bf a}_2$, when the collision detection and dynamic time-stepping problems are decoupled. 

We first show that, when objects with given state are in contact, any point on the contact patch should satisfy the equation of plane which contains the contact patch (point, line or surface). Thus, the contact constraints can be simplified as this specified constraint. Then, we can embed this simplified  constraint with the dynamic equations. We prove that, for any given state, there is an unique solution for the ECP and contact wrenches between $F$ and $G$.

Furthermore, the supporting or separating hyperplane which contains the contact patch (point, line or surface) is defined by the intersection of normal cones $\mathcal{C}(F,\bm{a}_1) $ and $-\mathcal{C}(G,\bm{a}_2)$. In Figure~\ref{figure:contact_nonpenetration_a} to~\ref{figure:contact_nonpenetration_d}, the intersection of normal cones is one normal line, thus the supporting hyperplane is uniquely defined. In Figure~\ref{figure:contact_nonpenetration_e}, the hyperplane is not unique. 
}

When objects are in contact, as Figure~\ref{figure:contact_nonpenetration_multiple} shows, the contact patch between them can be point contact (Figure~\ref{figure:contact_nonpenetration_a} and~\ref{figure:contact_nonpenetration_d}), line contact (Figure~\ref{figure:contact_nonpenetration_b} and~\ref{figure:contact_nonpenetration_e}) and surface contact (Figure~\ref{figure:contact_nonpenetration_c}). Any pair of points on the two objects that are in the contact patch and have identical coordinates in the inertial frame satisfy the contact constraints. Thus, in line or surface contact case, there exists multiple possible solutions for the closest points ${\bf a}_1$ and ${\bf a}_2$, when the collision detection and dynamic time-stepping problems are decoupled. 

Furthermore, the equations of contact constraints (Equations~\eqref{equation:re_contact_multiple_1} to \eqref{equation:re_contact_multiple_4}) can be simplified as equation of a supporting hyperplane that contains the contact patch.  By the separating hyperplane theorem, since the objects do not penetrate each other, there always exists a supporting hyperplane  (Figure~\ref{figure:contact_nonpenetration_multiple} illustrates this pictorially). The existence of the supporting hyperplane ensures that there exists a contact frame on the contact patch, which can be chosen as given below. Let the ECP ${\bf a}_2$ be the origin of the contact frame. Let the normal axis of the contact frame be the unit vector ${\bf n}\in \mathbb{R}^3$, which is orthogonal to the contact patch, and can be chosen to be the normal to the supporting hyperplane.  The tangential axes of the frame are unit vectors ${\bf t} \in \mathbb{R}^3 $ and ${\bf o}\in \mathbb{R}^3$ that can be chosen on the supporting hyperplane. 

When objects are in contact, ${\bf a}_1$ coincides with ${\bf a}_2$. Because ECP ${\bf a}_1$ or ${\bf a}_2$ lie in the contact patch, so they should satisfy the equation of the hyperplane. Thus, based on Definition~\ref{def:hyper}, the contact constraints~\eqref{equation:re_contact_multiple_1} to \eqref{equation:re_contact_multiple_5} can be simplified as the equation of plane:
\begin{equation}
\label{Equation::simplified constraints}
{\bf n}\cdot{\bf a}_2 = d 
\end{equation}
where $d$ is a constant.

We will now use the simplified contact constraints in~\eqref{Equation::simplified constraints} along with the dynamic time-step equations in~\eqref{eq:discrete_dynamics}
to show that, when objects with given state are in contact, there is an unique solution for the ECP and contact wrenches between $F$ and $G$.

Let $\mathcal{F}_w$ be the inertial frame.
The state of $F$ is described by position and orientation vector ${\bf q} = [q_x, q_y, q_z,{^s\theta_x}, {^s\theta_y}, {^s\theta_z}]^T$ and generalized velocity of center of mass $\bm{\nu} = [v_x , v_y , v_z, {^sw_x}, {^sw_y}, {^sw_z}]^T$, which is described in $\mathcal{F}_w$. 
Because ${\bf a}_1 = {\bf a}_2$, the vector from center of gravity of $F$ to the ECP can be defined as ${\bf r} = [a_{2x} - q_x, a_{2y} - q_y, a_{2z} - q_z]^T$. The vector of external impulses is ${\bf p}_{app} =  [p_x, p_y,-m\beta h+p_z,p_{x\tau},p_{y\tau},p_{z\tau}]^T$ where $\beta$ is the acceleration due to gravity. $p_x$, $p_y$ and $p_z$ are the external impulses except the gravity impulse acting on the objects in the directions of spatial frame's $\bf{X}$, $\bf{Y}$ and $\bf{Z}$ axis. $p_{x\tau}$, $p_{y\tau}$ and $p_{z\tau}$ are the external moments in spatial frame's $\bf{X}$, $\bf{Y}$ and $\bf{Z}$ axis.

\begin{proposition}{When two objects with given state at the end of the time-step are in contact, the solutions of ECP and contact wrench are unique and given by the following equations formed by Equation~\eqref{eq:discrete_dynamics} and Equation~\eqref{Equation::simplified constraints}.}
\end{proposition}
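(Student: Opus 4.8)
The plan is to treat the given end-of-step state as data and show that the six scalar equations of the discrete Newton--Euler system~\eqref{eq:discrete_dynamics} together with the single plane equation~\eqref{Equation::simplified constraints} determine the seven scalar unknowns --- the four impulses $p_n,p_t,p_o,p_r$ and the three coordinates of the ECP ${\bf a}_2$ --- uniquely. First I would note that, since $\bm{\nu}^{u+1}$, $\bm{\nu}^u$, the mass matrix ${\bf M}^u$, and the external/velocity-product impulses are all known, the right-hand side of~\eqref{eq:discrete_dynamics} collects into a single known $6$-vector ${\bf b} := {\bf M}^u(\bm{\nu}^{u+1}-\bm{\nu}^u) - {\bf p}^u_{app} - {\bf p}^{u+1}_{vp}$, so the dynamics reduce to ${\bf W}_n p_n + {\bf W}_t p_t + {\bf W}_o p_o + {\bf W}_r p_r = {\bf b}$. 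Because $G$ is fixed and the supporting hyperplane is unique in the point/line/surface cases (Figure~\ref{figure:contact_nonpenetration_multiple}(a)--(d)), the contact frame $\{{\bf n},{\bf t},{\bf o}\}$ and the constant $d$ are also fixed data. A $7$ unknowns versus $6+1$ equations count then makes the uniqueness claim at least plausible.

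The key structural fact I would exploit is that the problem decouples into a force balance and a moment balance. Reading off the top three rows of the wrench definitions~\eqref{equation:wrenches}, the force balance is ${\bf n}\,p_n + {\bf t}\,p_t + {\bf o}\,p_o = {\bf b}_{1:3}$; the moment impulse $p_r$ does not appear because ${\bf W}_r$ has a zero top block. Since $\{{\bf n},{\bf t},{\bf o}\}$ is orthonormal, projecting ${\bf b}_{1:3}$ onto each axis yields the unique values $p_n = {\bf n}^T{\bf b}_{1:3}$, $p_t = {\bf t}^T{\bf b}_{1:3}$, $p_o = {\bf o}^T{\bf b}_{1:3}$; note that for genuine compressive contact $p_n = {\bf n}^T{\bf b}_{1:3} > 0$, which I will use below. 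At this stage the total contact force ${\bf f} := {\bf n}p_n+{\bf t}p_t+{\bf o}p_o$ is fully determined, independent of the (still unknown) location of the ECP.

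Next I would turn to the bottom three rows (moment balance), ${\bf r}\times{\bf f} + p_r\,{\bf n} = {\bf b}_{4:6}$, where ${\bf r} = {\bf a}_2 - {\bf q}$. Writing ${\bf r} = r_n{\bf n} + r_t{\bf t} + r_o{\bf o}$, the plane constraint~\eqref{Equation::simplified constraints} immediately fixes the normal component $r_n = d - {\bf n}^T{\bf q}$. Expanding ${\bf r}\times{\bf f}$ in the orthonormal contact frame produces a linear system in the remaining unknowns $(r_t, r_o, p_r)$: the ${\bf t}$- and ${\bf o}$-components contain $p_n r_o$ and $p_n r_t$ respectively, so with $p_n\neq 0$ they solve uniquely for $r_t$ and $r_o$ (hence for ${\bf a}_2$ on the plane), after which the ${\bf n}$-component returns a unique $p_r$. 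Since any admissible solution must satisfy the same equations, uniqueness follows.

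The main obstacle --- and the only place where physics rather than linear algebra enters --- is guaranteeing that this moment subsystem is nonsingular, which reduces to $p_n\neq 0$. I would argue that ``in contact'' here means a strictly positive compressive normal impulse $p_n>0$, which is exactly the condition under which the ECP (the point where the net moment of the normal force vanishes) is well defined; this makes the coefficient of $(r_t,r_o)$ nonzero and closes the argument. For completeness I would also remark that the in-plane choice of ${\bf t}$ and ${\bf o}$ does not affect ${\bf a}_2$ or the net wrench, and that the degenerate non-unique-hyperplane configuration of Figure~\ref{figure:contact_nonpenetration_multiple}(e), where ${\bf n}$ itself is not determined, falls outside the uniqueness claim.
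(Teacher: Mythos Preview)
Your proposal is correct and follows essentially the same approach as the paper: decouple the force balance (top three rows) to obtain $p_n,p_t,p_o$ uniquely via the orthonormality of $\{{\bf n},{\bf t},{\bf o}\}$, then use the moment balance together with the plane constraint to solve for ${\bf r}$ and $p_r$, with nonsingularity reducing to $p_n\neq 0$. The only cosmetic difference is that the paper packages the second step as a $4\times 4$ linear system with coefficient matrix ${\bf D}=\begin{bmatrix}[{\bf A}]_\times^T & {\bf n}\\ {\bf n}^T & 0\end{bmatrix}$ (where ${\bf A}$ is your ${\bf f}$) and computes $|{\bf D}|={\bf A}\cdot{\bf n}=p_n$, whereas you expand ${\bf r}\times{\bf f}$ in the contact frame and solve component by component; the content is identical.
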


\begin{equation}
\begin{aligned}
\label{equation:Euler_123}
 0 &= -\bar{{\bf M}}^{u}({\bf v}^{u+1}-{\bf v}^{u}) + p_n^{u+1}{\bf n}^{u+1}
 \\&+ p_t^{u+1}{\bf t}^{u+1}+p_o^{u+1}{\bf o}^{u+1} +
\left[\begin{matrix}
 p_x^{u}\\
 p_y^{u}\\
 -m\beta h+p_z^{u}
 \end{matrix}\right] 
 \end{aligned}
\end{equation}
 \begin{equation}
\begin{aligned}
\label{equation:Euler_456}
 0  &=- {\bf I}^{u} ({^s{\bf w}}^{u+1}-{^s{\bf w}^{u}})-{^s{\bf w}}^{u+1} \times( {\bf I}^{u} {^s{\bf w}^{u+1}}) \\
&+ p_n^{u+1}{\bf r}^{u+1}\times{\bf n}^{u+1} 
 + p_t^{u+1}{\bf r}^{u+1}\times{\bf t}^{u+1}\\
& + p_o^{u+1}{\bf r}^{u+1}\times{\bf o}^{u+1}
+ p_r^{u+1}{\bf n}^{u+1}
+\left[\begin{matrix}
p_{x\tau}^{u}\\
p_{y\tau}^{u}\\
p_{z\tau}^{u}
\end{matrix}\right]
\end{aligned}
\end{equation}
\begin{equation}
\label{equation:Euler_7}
{\bf n}^{u+1}\cdot{\bf a}_2^{u+1} = d^{u+1} 
\end{equation}
where $\bar{{\bf M}} = {\bf M}(1:3,1:3)$, ${{\bf I}} = {\bf M}(4:6,4:6)$, ${\bf r} ={\bf a}_2 - \bar{{\bf q}}$ and $\bar{{\bf q}} = [q_x,q_y,q_z]^T$.
\begin{proof}
The proof can be separated into two parts:
\begin{enumerate}
\item Contact wrench $p_t^{u+1}$, $p_o^{u+1}$ and $p_n^{u+1}$ are unique.
\item ECP ${\bf a}_2^{u+1}$ and contact wrench $p_r^{u+1}$ are unique.
\end{enumerate}
(1) First, let us prove that the contact wrench $p_t^{u+1}$, $p_o^{u+1}$ and $p_n^{u+1}$ are unique. Since the generalized velocity $\bm{\nu}^{u+1}$ is known at time step $u+1$, Equation~\eqref{equation:Euler_123} can be modified as:
\begin{equation}
\begin{aligned}
\label{eq:22}
{\bf C}
 \left[\begin{matrix}
 p_n^{u+1}\\
 p_t^{u+1}\\
p_o^{u+1}
 \end{matrix}\right] 
 = \bar{{\bf M}}^u({\bf v}^{u+1}-{\bf v}^{u})   -
\left[\begin{matrix}
 p_x^{u}\\
 p_y^{u}\\
 -m\beta h+p_z^{u}
 \end{matrix}\right] 
 \end{aligned}
\end{equation}
where ${\bf C} = \left[\begin{matrix} 
 {\bf n}^{u+1},{\bf t}^{u+1},{\bf o}^{u+1}
 \end{matrix}\right]$. 
 Note that, the columns of matrix ${\bf C}$, namely, ${\bf n}^{u+1},{\bf t}^{u+1},{\bf o}^{u+1}$ are the axes of contact frame for the contact between objects. Thus, ${\bf C}$ is an orthogonal matrix ${\bf C}$ and ${\bf C}^T$ is the inverse of ${\bf C}$. Thus, from Equation~\eqref{eq:22}, the values of $p_t^{u+1}, p_o^{u+1}$ and $p_n^{u+1}$ are unique.
 
(2) Now, we will prove that the ECP ${\bf a}_2^{u+1}$ and contact wrench $p_r^{u+1}$ are unique. 
Equation~\eqref{equation:Euler_7} can be written as:
${\bf n}^{u+1}\cdot{\bf r}^{u+1} = d^{u+1}-{\bf n}^{u+1}\cdot\bar{{\bf q}}^{u+1} $
Where $d^{u+1}-{\bf n}^{u+1}\cdot\bar{\bf q}^{u+1}$ is known.

Let
\begin{align}
\label{equation:A}
 {\bf A} &= p_n^{u+1}{\bf n}^{u+1}+ p_t^{u+1}{\bf t}^{u+1}+ p_o^{u+1}{\bf o}^{u+1}\\
 {\bf B} &= {\bf I}^u ({^s{\bf w}}^{u+1}-{^s{\bf w}}^{u})+{^s{\bf w}}^{u+1} \times {\bf I}^u ({^s{\bf w}^{u+1}})-\left[\begin{matrix}
{p_{x\tau}}^{u}\\
{p_{y\tau}}^{u}\\
{p_{z\tau}}^{u}\end{matrix}\right] 
\end{align}

Thus, Equation~\eqref{equation:Euler_456} can be rewritten as:
\begin{equation}
\label{equation:unique_2}
\left[\begin{matrix}[{\bf A}]_{\times}^{T},{\bf n}^{u+1} \\
({\bf n}^T)^{u+1}, 0 \end{matrix}\right] \left[\begin{matrix}\bm{r}^{u+1}\\ p_r^{u+1}\end{matrix}\right]  = \left[\begin{matrix}{\bf B}\\ d^{u+1}-{\bf n}^{u+1}\cdot\bar{\bf q}^{u+1} \end{matrix}\right]
\end{equation}
where $[{\bf A}]_{\times}$ is a skew-symmetric matrix given by:$$[{\bf A}]_{\times} = \left[\begin{matrix}
0 & -A_3 & A_2\\
A_3 & 0 & -A_1\\
-A_2 & A_1 & 0
\end{matrix}\right]$$

By direct calculation, the determinant of the matrix ${\bf D} = \left[\begin{matrix}[{\bf A}]_{\times}^{T},({\bf n})^{u+1} \\
({\bf n}^T)^{u+1}, 0 \end{matrix}\right]$ is given by
\begin{equation}
|{\bf D}| = {\bf A}\cdot{\bf n}^{u+1}.
\end{equation}

From Equation~\eqref{equation:A}, because $p_n^{u+1} \neq 0$, the vector ${\bf A}$ will never be orthogonal to the vector ${\bf n}$. In other words, the determinant $|{\bf D}|$ is always non-zero. Thus, matrix ${\bf D}$ is always invertible. Therefore, from Equation~\eqref{equation:unique_2}, 
the solution for ${\bf r}^{u+1}$ and $p_r^{u+1}$ should be unique. 
\end{proof}

To sum up, the solution for the contact points and contact wrench is unique when object $F$ and $G$ are in contact. Thus, computing the ECP and the equivalent contact wrench by using our method is a well-posed problem. 

\begin{remark}
Note that the proof given above is non-constructive and it is not used to do any computations for simulating motion. Thus, we need only the existence of the supporting hyperplane, we actually do not need to compute it for the proof to be valid. 
\end{remark}
\begin{remark}
In the proof above, we have not used the friction constraints given by Equations~\eqref{equation:friction}. This is because the friction constraints do not change the fact that the ECP and the contact impulses can be solved uniquely. Depending on the values chosen for the generalized velocity, the obtained contact impulses may violate the friction constraints. However, as stated before, we are not giving any computational recipe to compute the ECP and contact wrenches, only a non-constructive proof of the notion of well-posedness. When we solve the whole MNCP numerically, the solutions that we obtain will satisfy all the constraints including friction constraints.
\end{remark}
\begin{remark}
In this section, we are also not making any claims about the existence or uniqueness of solutions (i.e., velocity of the objects, ECPs, and contact impulse) of the overall mixed nonlinear complementarity problem obtained from the equations of motion with contact constraints. The theoretical proof of existence and uniqueness question of solutions to the MNCP corresponding to our dynamic model is an open question.
\end{remark}
\begin{remark}
The results in this section may also be useful for solving estimation problems when the object is sliding over a surface and has non-point contact.
Given the applied force and velocity at the end of time step, we can compute the equivalent contact forces and moments and the ECP. These can be used in contact parameter estimation.
\end{remark}

\comment{
 \begin{figure}
\centering
\includegraphics[width=3in]{figure_2}
\caption{The surface contact of the object A and the flat plane}
\label{figure:3D_surface} 
\end{figure} 
\subsection{3D Surface Contact}
Consider an object in surface contact with a flat plane (see Figure~\ref{figure:3D_surface}).
Let $C$ be the center of mass of the object, $H$ be the height of $C$ from the flat plane, $C^\prime$ be the projection of $C$ onto flat plane. The point $D$ is obtained by intersection of the (extended) vector joining $C^\prime$ and the ECP with the boundary of contact surface. Vector $\bm{D}$ connects point $C^{\prime}$ and point D.

For surface contact, the object can only rotate about the ${\bf Z}^{B}$ axis, which is always parallel to the ${\bf Z}$ axis, so the other two angular velocities $ ^sw_x^{ u+1} = {^sw_x^{u}} = 0, \ ^sw_y^{ u+1} = {^sw_y^{ u}} = 0$. Also, as shown in the figure, $q_z^{u+1} = q_z^{u} = H$.  Since we assume the contact patch to be a convex set the ECP  must lie in the contact patch. So
\begin{equation}
\label{equation:3D_surface_constraints}
(a_{2x}^{u+1}- q_x^{u+1})^2+(a_{2y}^{u+1}- q_y^{u+1})^2 \\ \le ||{\bf D}||^2
\end{equation} 

\begin{proposition}{For 3D surface contact, for any value of the state (configuration ${\bf q}^{u+1}$ and generalized velocity $\bm{\nu}^{u+1}$), there is a unique value of the ECP $({\bf a}^{u+1})$ and contact impulses ($p_t^{u+1}$, $p_o^{u+1}$, $p_r^{u+1}$, $p_n^{u+1}$), which can be obtained from}
\begin{align}
\label{equation:3D_surface_1}
 &0 = m(v_x^{u} - v_x^{u+1}) + p_t^{u+1} + p_x^{u}\\
\label{equation:3D_surface_2}
&0 = m(v_y^{u} - v_y^{u+1}) + p_o^{u+1}+p_y^{u} \\
\label{equation:3D_surface_3}
&0 = 0 + p_n^{u+1}-m\beta h + p_z^{u} \\
\label{equation:3D_surface_4}
 &0 = p_n^{u+1} (a_{2y}^{u+1}-q_y^{u+1}) + p_o^{u+1} q_z^{u} + p_{x\tau}^{u} \\
\label{equation:3D_surface_5}
&0 =p_n^{u+1} (a_{2x}^{u+1}-q_x^{u+1}) + p_t^{u+1}q_z^{u} -p_{y\tau}^{u}\\
\label{equation:3D_surface_6}
&0 =p_r^{u+1}+p_{z\tau}^{u} - I_z( {^sw_z^{ u+1}}-{^sw_z^{ u}}) + H_{\tau}
\end{align}
where $H_{\tau} = -[p_{y\tau}^{u} ( a_{2y}^{u+1}-q_y^{u+1})+p_{x\tau}^{u}(a_{2x}^{u+1}-q_x^{u+1})]/q_z^{u}$
\end{proposition}
\begin{proof}
Combined with contact constraints, the Newton-Euler equations $\ref{equation:Euler_123}$ and  $\ref{equation:Euler_456}$ can be simplified as above equations. From equation $\ref{equation:3D_surface_1}$, for any value of $v_x^{u+1}$, the value of $p_t^{u+1}$ is unique. Similarly, from equations $\ref{equation:3D_surface_2},\ref{equation:3D_surface_3},\ref{equation:3D_surface_4},\ref{equation:3D_surface_5}$ and $\ref{equation:3D_surface_6}$, we can get the same conclusion for $p_o^{u+1}, p_r^{u+1}, p_n^{u+1}, a_{2x}^{u+1}$ and $a_{2y}^{u+1}$. So, we prove that for any state of the object, there is a unique value for ECP and contact impulses.
\end{proof}
If the solution of the ECP is in the range of equation $\ref{equation:3D_surface_constraints}$, then the object has the surface contact with the plane at the end of the time step. If the solution is not in the range, then there may be either no contact, line contact, or point contact.
The proposition above shows that assuming surface contact at the end of the time step, the problem of solving for the ECP and contact wrench simultaneously is well posed since for any value of the state of the rigid body at the end of the time step there is a unique solution for the ECP and contact wrench. If there is a transition to point contact, the existence of a unique solution is trivially true. For line contact also, we can show that there has to be a unique solution for ECP and wrench (as mentioned before, the details are not provided due to space constraints, however the proof procedure is similar).  {\em Thus, computing the contact wrench and ECP  together is a well posed problem, whereas for the traditional approaches using the decoupled collision detector, computing the ECP is an ill-posed problem}. 
\subsection{3D Line Contact}
\begin{figure}
\centering
\includegraphics[width=3in]{figure_3}
\caption{The line contact of the object A and the flat plane}
\label{figure:3D_line_A} 
\end{figure} 

The figure $\ref{figure:3D_line_A}$ shows general case of 3D line contact, $C$ is the center of mass of the object, $C^{\prime}$ is projection of $C$ onto the flat plane and $C^{\prime\prime}$ is projection of $C^\prime$ onto the contact line. $H$ is the height of $C$ from the flat plane and $D$ is the distance from $C^\prime$ to $C^{\prime\prime}$. $L_1$ and $L_2$ are the lengths from $C^{\prime\prime}$ to the ends of the contact line separately. Because the position of $C^{\prime\prime}$ related to the object is fixed, thus the value of 
$L_1$ and $L_2$ is both constant.

As shown in the figure $\ref{figure:3D_line_A}$, the rotation matrix from body frame to the world frame $\textbf{R} =\textbf{R}_1\textbf{R}_2$, where $\textbf{R}_1$ is the elementary rotation matrix about normal axis ($\textbf{Z}$ axis) with $^s\theta_z$, and $\textbf{R}_2$ is the elementary rotation matrix about $\textbf{Y}$ axis with $^s\theta_y$.  We can define a new frame $^*\bf{W}= \textbf{R}_1^T\bf{W}$, which is shown in red color. 

Let us define $^*\bm{\nu} = diag(\textbf{R}_1^T,\textbf{R}_1^T)\bm{\nu}$, which is the generalized velocity described in $^*\bf{W}$. \comment{We also define $^*\textbf{q} =  diag(\textbf{R}_1^T,\textbf{R}_1^T)\textbf{q}$, which is the configuration of object described in $^*\bf{W}$.} Also, we define contact impulses expressed in $^*\bf{W}$, $[^*p_t, ^*p_o, ^*p_r]^T =\textbf{R}_1^T[p_t, p_o, p_r]^T$.

Considering the contact constraint in Line contact case, if we observe from $^*\bf{W}$, the object can rotate about $^*{\bf Z}$ axis and $^*{\bf Y}$ axis, so angular velocity $ ^{*}w_x= 0$. As shown in the figure $\ref{figure:3D_line_A}$, $ q_z= H$. In addition, the shape of contact surface is a line, and the ECP must on the line. So
\begin{equation}
\label{equation:D}
D=\sin^s\theta_z(a_{1y}-q_y) +\cos^s\theta_z(a_{1x}-q_x)
\end{equation}
And we can define segment $L$ which represents length from $C^{\prime \prime}$ to $A$.
\begin{align}
\label{equation:L}
L &=-\sin^s\theta_z(a_{1x}-q_x) +\cos^s\theta_z(a_{1y}-q_y) \\
\label{equation: boundary}
-&L_1 < L < L_2
\end{align}
\begin{proposition}{In general case of 3D line contact, for any value of the state (configuration $\textbf{q}^{l+1}$ and generalized velocity $\bf^*{\nu}^{l+1}$), there is a unique value of the ECP $(\bf{a}^{u+1})$ and contact impulses ($^*p_t^{u+1}$, $^*p_o^{u+1}$, $^*p_r^{u+1}$).}
\begin{align}
\label{equation:3D_line_1}
0 &= m(^*v_x^{u} - {^*v_x}^{u+1}) + {^*p_t}^{u+1} + {^*p_x}^{u}\\
\label{equation:3D_line_2}
0 &= m(^*v_y^{u} - {^*v_y}^{u+1}) + {^*p_o}^{u+1}+{^*p_y}^{u} \\
\label{equation:3D_line_3}
0 &= m(^*v_z^{u} - {^*v_z}^{u+1}) + p_n^{u+1}-m\beta h +p_z^{u} \\
\label{equation:3D_line_4}
0 &=\mathcal{L}_x-E +{^*p_{x\tau}}^{u} \\
\label{equation:3D_line_5}
0 &=\mathcal{L}_y+I_y({^*w_y}^{u}-{^*w_y}^{u+1})+{^*p_{y\tau}}^{u} \\
\label{equation:3D_line_6}
 0& =\mathcal{L}_z+{^*p_r}^{u+1}+I_z({^*w_z}^{u}-{^*w_z}^{u+1})+{^*p_{z\tau}}^{u}
\end{align}
Where $E ={^*w_y}^{u}{^*w_z}^{u}(I_z-I_y)$, $\mathcal{L}_x = L^{u+1} p_n^{u+1} + {^*p_o}^{u+1} H^{u+1}$, $\mathcal{L}_y =-D^{u+1} p_n^{u+1} -{^*p_t}^{u+1}H^{u+1}$, $\mathcal{L}_z = -L^{u+1}{^*p_t}^{u+1}+D^{u+1} {^*p_o}^{u+1}$, $H^{u+1} = q_z^u+h{^*v_z}^{u+1}$.
\end{proposition}
\begin{proof}
Combined with contact constraints and expressed in the frame $^*\textbf{W}$, the Newton-Euler equations $\ref{equation:Euler_123}$ and  $\ref{equation:Euler_456}$ can be simplified as above equations and from above equations we can observe that for any value of $^*\bm{\nu}$, the values of $^*p_t^{u+1}$, $^*p_o^{u+1}$, $p_n^{u+1}$, $^*p_r^{u+1}$, $D^{u+1}$, $H^{u+1}$, and $L^{u+1}$ are unique. In addition,
\begin{equation}
\label{equation:trans}
\left [
\begin{matrix}
\cos({^s\theta}_z^{u+1}) \quad \sin({^s\theta}_z^{u+1})\\
-\sin({^s\theta}_z^{u+1}) \quad \cos({^s\theta}_z^{u+1})
\end{matrix}
\right]
\left[
\begin{matrix}
a_{1x}^{u+1}-q_x^{u+1}\\
a_{1y}^{u+1}-q_y^{u+1}
\end{matrix}
\right]
= 
\left[
\begin{matrix}
D^{u+1}\\
L^{u+1}
\end{matrix}
\right]
\end{equation}
So values of $a_{1x}^{u+1}$ and $a_{1y}^{u+1}$ are unique. 
\end{proof}
Similarly, it the solution of ECP $a_{1x}^{u+1}$ and $a_{1y}^{u+1}$ is in the range of equation $\ref{equation:L}$ and $\ref{equation: boundary}$, then the object has the line contact with the flat plane at the end of time step.
} 
\section{NUMERICAL RESULTS}
\label{sec:res}
In this section we present examples to validate and demonstrate our technique. 
We consider two canonical examples. In the first example, we consider the manipulation of a box-shaped object modeled as a cube on a plane. In the second example, we consider the manipulation of a cylindrical object on the plane. In order to focus on the novel aspects in this paper, we do not focus on simulating the robots that are actually doing the manipulation. Instead we just assume that the effect of the robot is to apply a generalized force or generalized velocity on the objects. We use the $\bf{PATH}$ complementarity solver to solve the nonlinear complementarity problem formed at each discrete time step of the dynamic model.

We consider four different simulation scenarios. The first scenario is for a cube undergoing translational motion on a flat plane, where we compare the solution from our general method using PATH with popular open source dynamics solvers, e.g., ODE~\cite{SmithODE} and BULLET~\cite{CouBullet}. Since, in this case we know the analytical solution~\cite{XieC16}, we can use that as the ground truth. In the second example, a cylinder undergoing combined translation and rotation in the plane with line contact  is simulated. The third example shows a cube transitioning between point, line and surface contact. The last example illustrates a manipulation task for a cylindrical object on the plane with obstacles.

\subsection{Scenario 1: The cube sliding on the plane}

\begin{figure*}%
\centering
\begin{subfigure}[t]{0.50\columnwidth}
\includegraphics[width=\columnwidth]{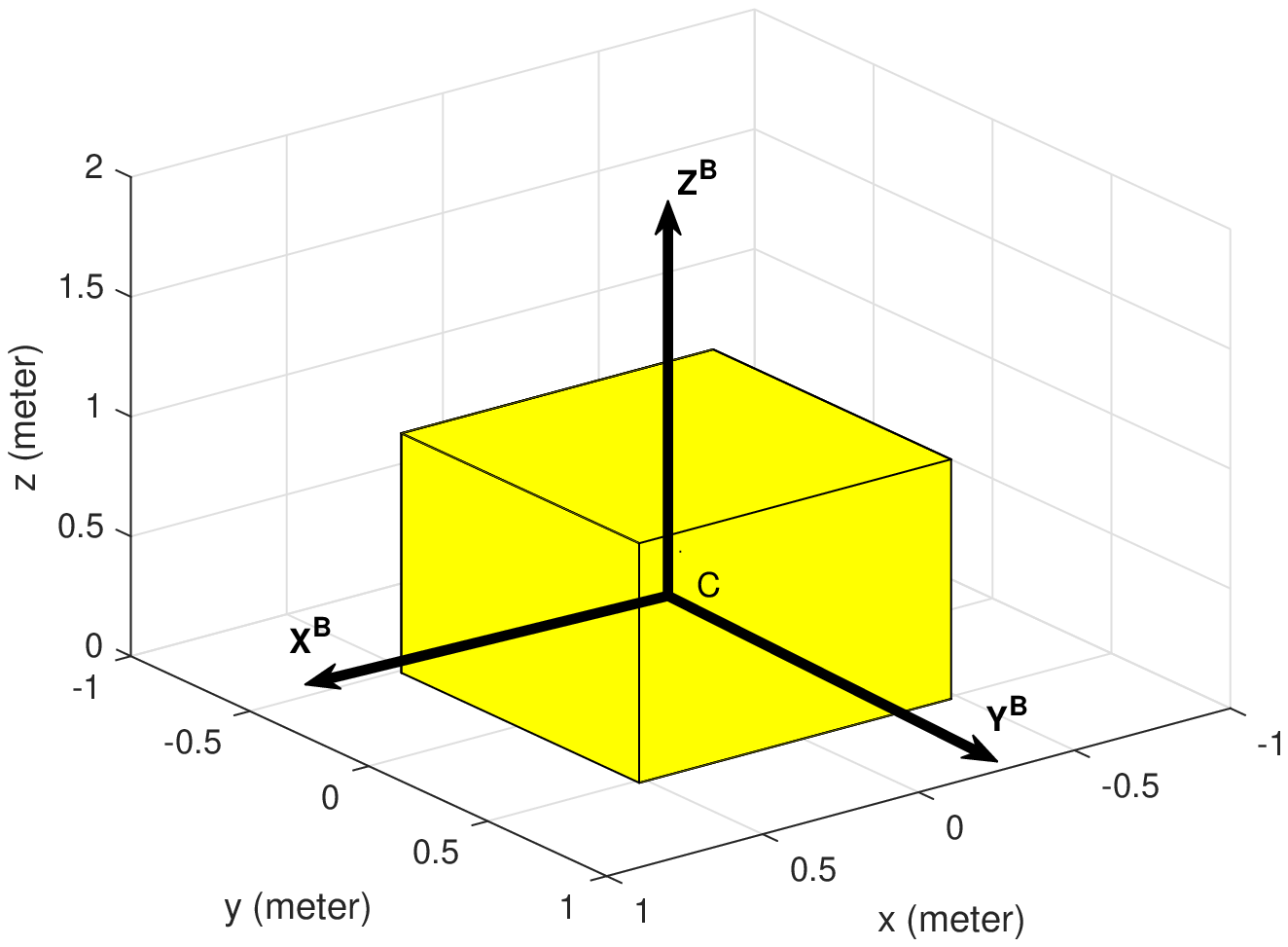}%
\caption{Unit cube on the plane. }
\label{figure:ex2_picture} 
\end{subfigure}\hfill%
\begin{subfigure}[t]{0.50\columnwidth}
\includegraphics[width=\columnwidth]{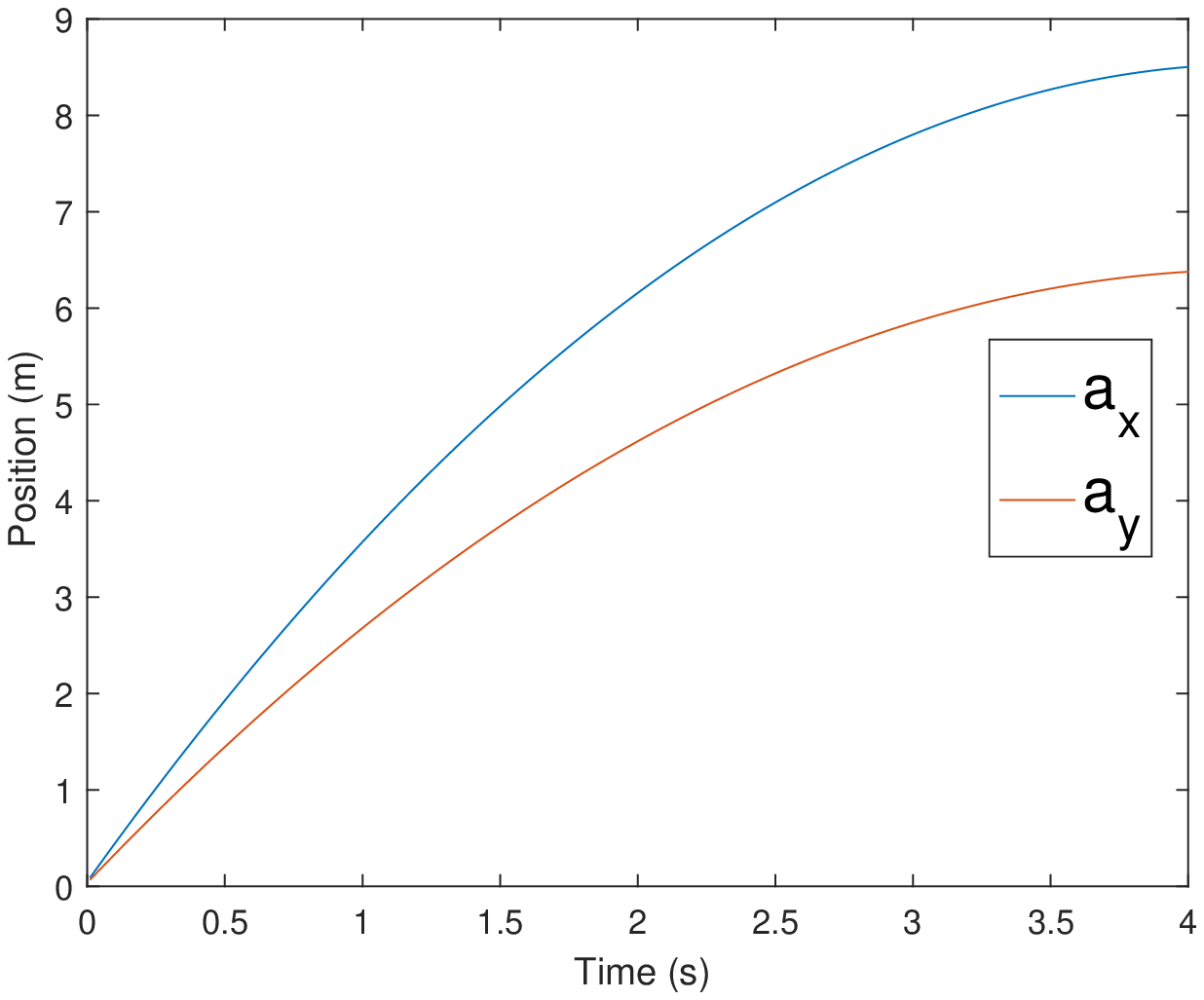}%
\caption{Position components of the ECP $a_x$ and $a_y$ of the cube. }
\label{figure:ex2_closest_point} 
\end{subfigure}\hfill%
\begin{subfigure}[t]{0.50\columnwidth}
\includegraphics[width=\columnwidth]{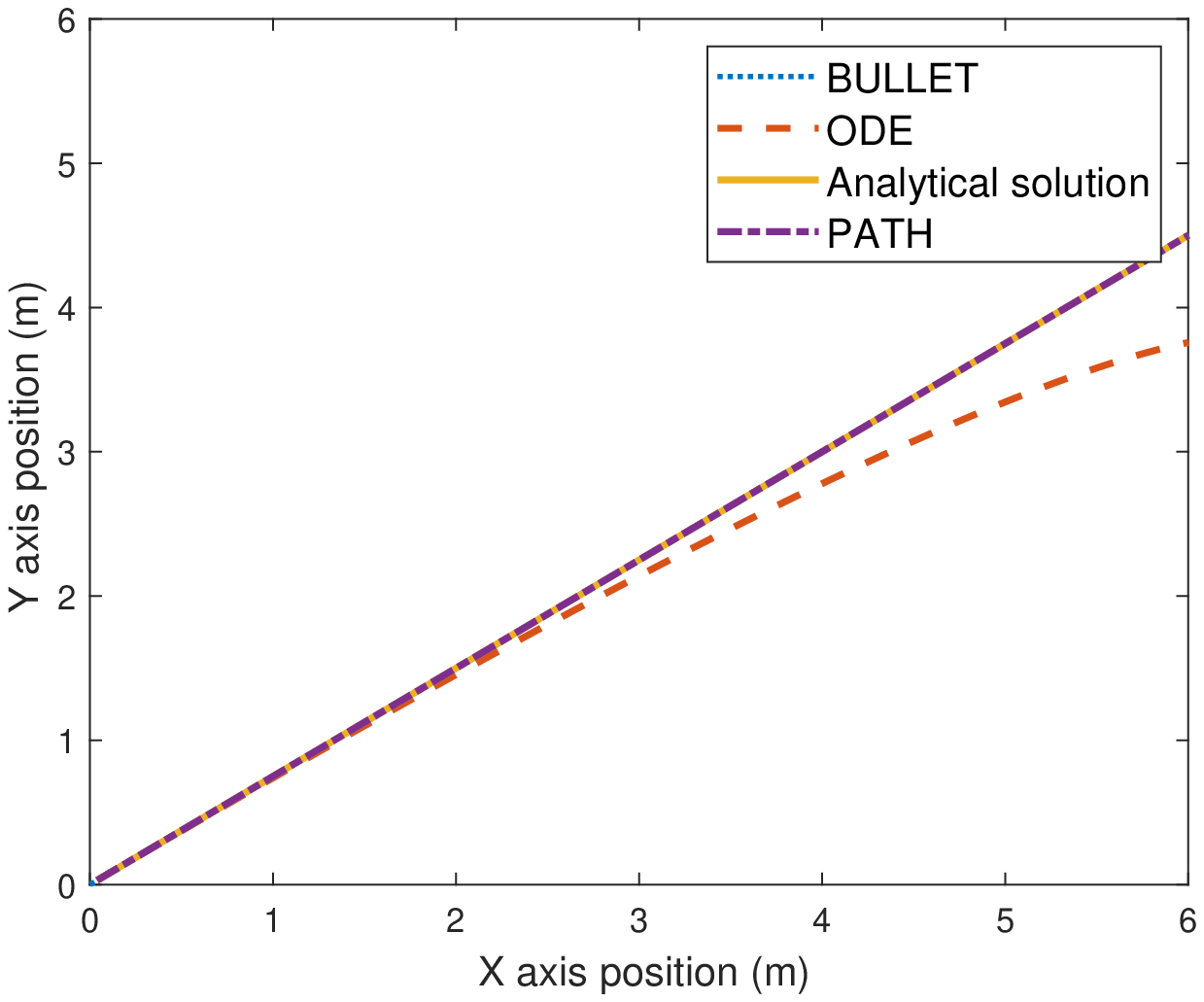}%
\caption{Comparison of analytical solution, our solution, ODE, and BULLET.}
\label{figure:ex2_delta_2} 
\end{subfigure}\hfill%
\begin{subfigure}[t]{0.50\columnwidth}
\includegraphics[width=\columnwidth]{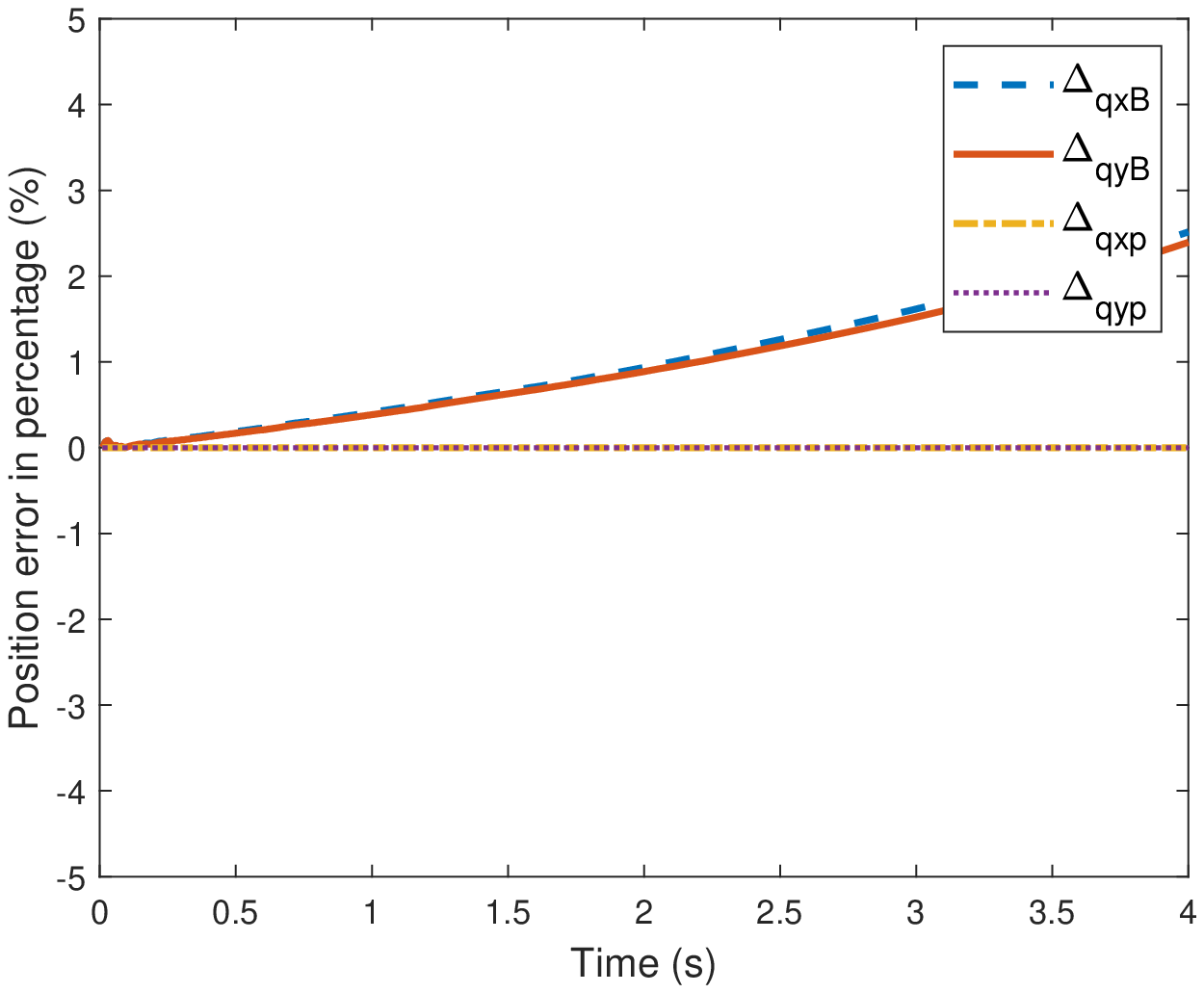}%
\caption{Comparison of analytical solution, our solution and BULLET.}
\label{figure:ex2_delta_3} 
\end{subfigure}%
\caption{Cube sliding on a plane.}
\label{Example1}
\end{figure*}

In this example, we consider a cube with $1$ m side length and $1$ kg weight sliding on the flat plane without rotation (see Figure~\ref{figure:ex2_picture}). Let $ \beta = 9.8$(m/s$^2$), $h = 0.01$s, $\mu = 0.12$, $e_t = 1$, $e_o = 1$, $e_r = 1$m. The initial configuration of the cube is ${\bf q} = [0, 0, 0.5$m$, 1, 0, 0, 0]^T$ (we use the unit quaternion to represent the attitude of the cube), and the generalized velocity is $\bm{\nu} = [4 $  m/s, $ 3  $ m/s $,0, 0, 0,0]^T$. In all three scenarios, we let the acceleration due to gravity is $\beta =9.8$ ($m/s^2$). We choose the time step to be $h = 0.01$s, and the tolerance of simulation to be 1e-8. In this scenario, we set the simulation time to be 4s. 

Figure~\ref{figure:ex2_closest_point} shows the change of equivalent contact point (ECP) obtained by solving the NMCP for each time step using PATH. Note that, for this problem, the ECP, contact wrench and state of the cube at the end of a time step can be computed analytically.

In Figure~\ref{figure:ex2_delta_2}, we compare the analytical solution~\cite{XieC16} (which we consider as the ground truth) to the solution from PATH, the solution obtained from Open Dynamic Engine (ODE), and the solution obtained from BULLET. In ODE, we use $4$ contact points to approximate the contact face and set the contact mode: contact.surface.mode = dContactApprox1. In BULLET, we use 4 contact points to approximate the contact face. In Figure~\ref{figure:ex2_delta_2}, we can see that the solution of ODE has a large error and the direction of sliding changes which can not be true for the given input. The analytical solution and the solution from PATH agrees well depicting the correctness of our approach.

In Figure~\ref{figure:ex2_delta_3}, we compare the analytical solution (which we consider as the ground truth) to the solution from PATH and the solution obtained from BULLET. In PATH, the result describing the position are $q_{x_p}$ , $q_{y_p}$. In BULLET, the result describing the position are $q_{x_B}$ , $q_{y_B}$. The analytical solutions are $q_{x}$, $q_{y}$. We calculate the error in percentage $\Delta q_{x_B} = |q_{x} -q_{x_B}|/q_x \times 100\% $, $\Delta q_{y_B} = |q_{y} -q_{y_B}|/q_y \times 100\% $, $\Delta q_{x_p} = |q_{x} -q_{x_p}|/q_x \times 100\% $ and $\Delta q_{y_p} = |q_{y} -q_{x_p}|/q_y \times 100\%$. In Figure~\ref{figure:ex2_delta_3}, we can see that the error of BULLET increases as time increases.

\subsection{Scenario 2: Cylindrical object rolling and rotating on the plane}
In this example, we consider a cylinder that rolls and rotates on the plane with 3D line contact (see Figure~\ref{figure:ex4_picture} ). The length of the cylinder is $l = 5$ m, the radius of the cylinder is $1$ m and its mass is $ m = 10$ kg. The coefficient of friction between the cylinder and the surface is $\mu = 0.3$. 

 Let the cylinder start at the configuration ${\bf q}=[0,0,1$m$,0,0,0,0]^T$ with initial generalized velocity $\bm{\nu} = [0, -1.4$ m/s$,0,0,0,0.2]^T$. In the simulation, whenever the angular velocity $^sw_z$ becomes $0$, we provide an impulse $p_{z\tau} = 3$ Nms on the cylinder. In this scenario, we set the simulation time to be 10s. 

\begin{figure*}%
\centering
\begin{subfigure}{0.65\columnwidth}
\includegraphics[width=\columnwidth]{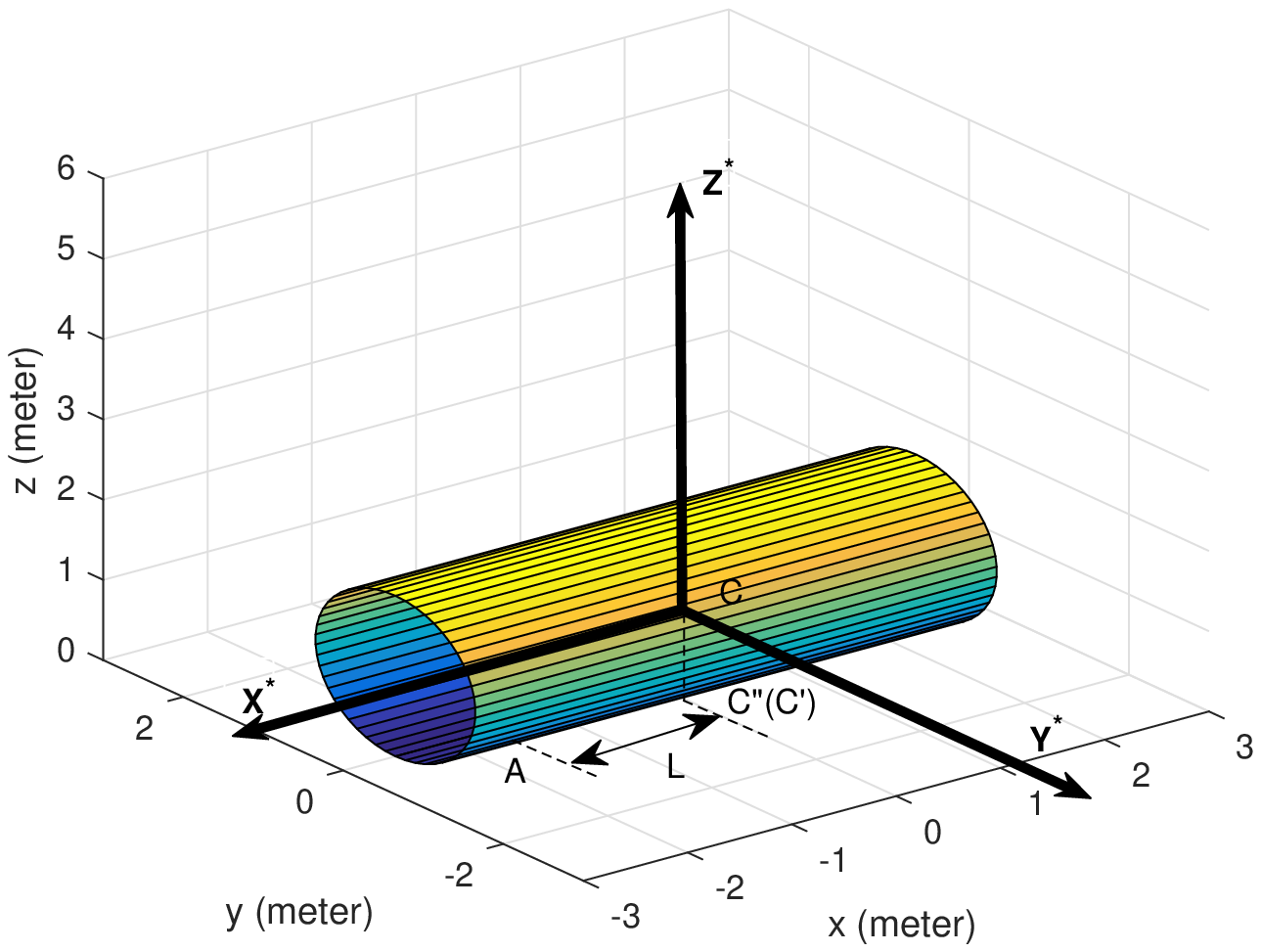}%
\caption{Cylinder on the plane. }
\label{figure:ex4_picture} 
\end{subfigure}\hfill%
\begin{subfigure}{0.65\columnwidth}
\includegraphics[width=\columnwidth]{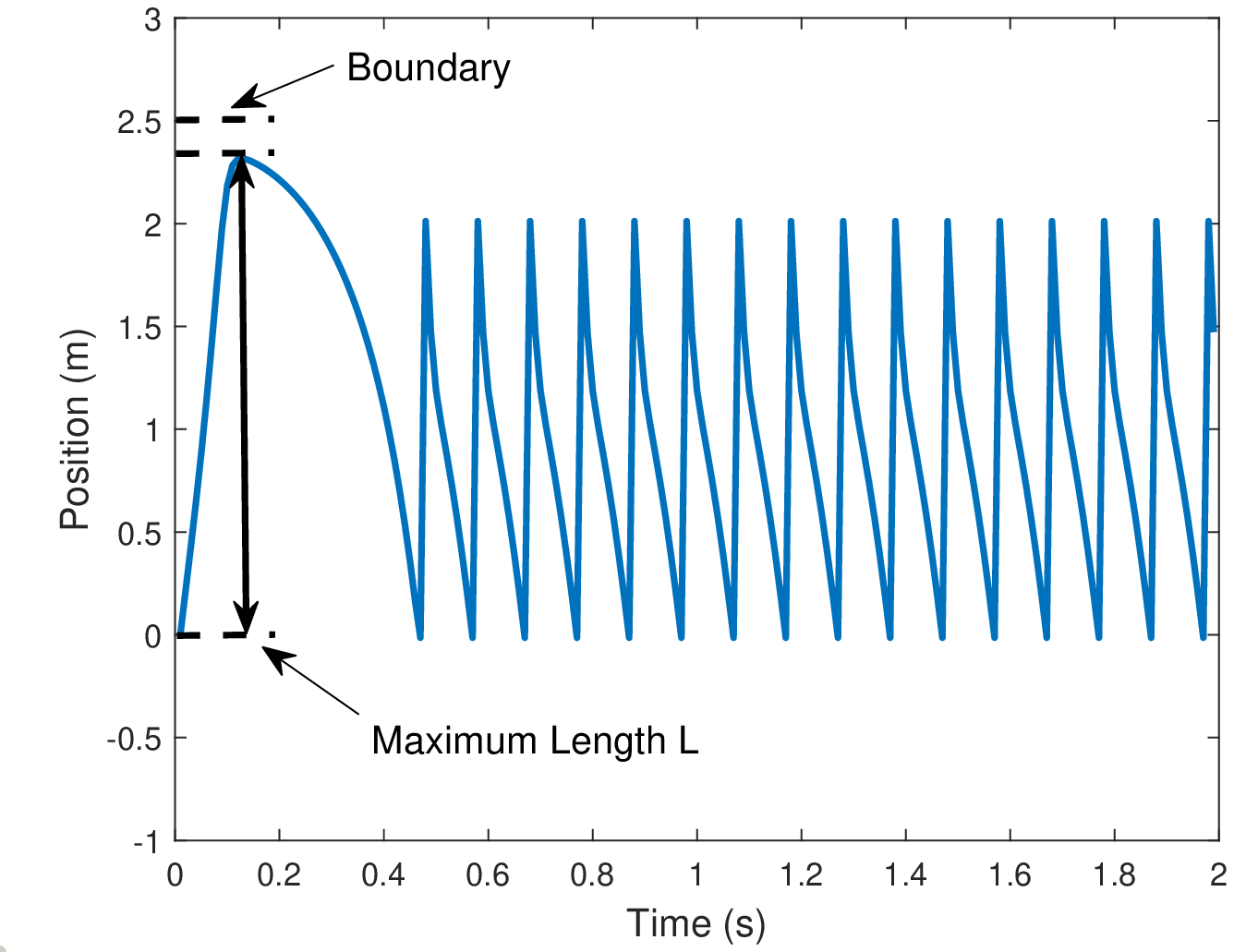}%
\caption{Distance of equivalent contact point along the contact line.}
\label{figure:ex4_al} 
\end{subfigure}\hfill%
\begin{subfigure}{0.65\columnwidth}
\includegraphics[width=\columnwidth]{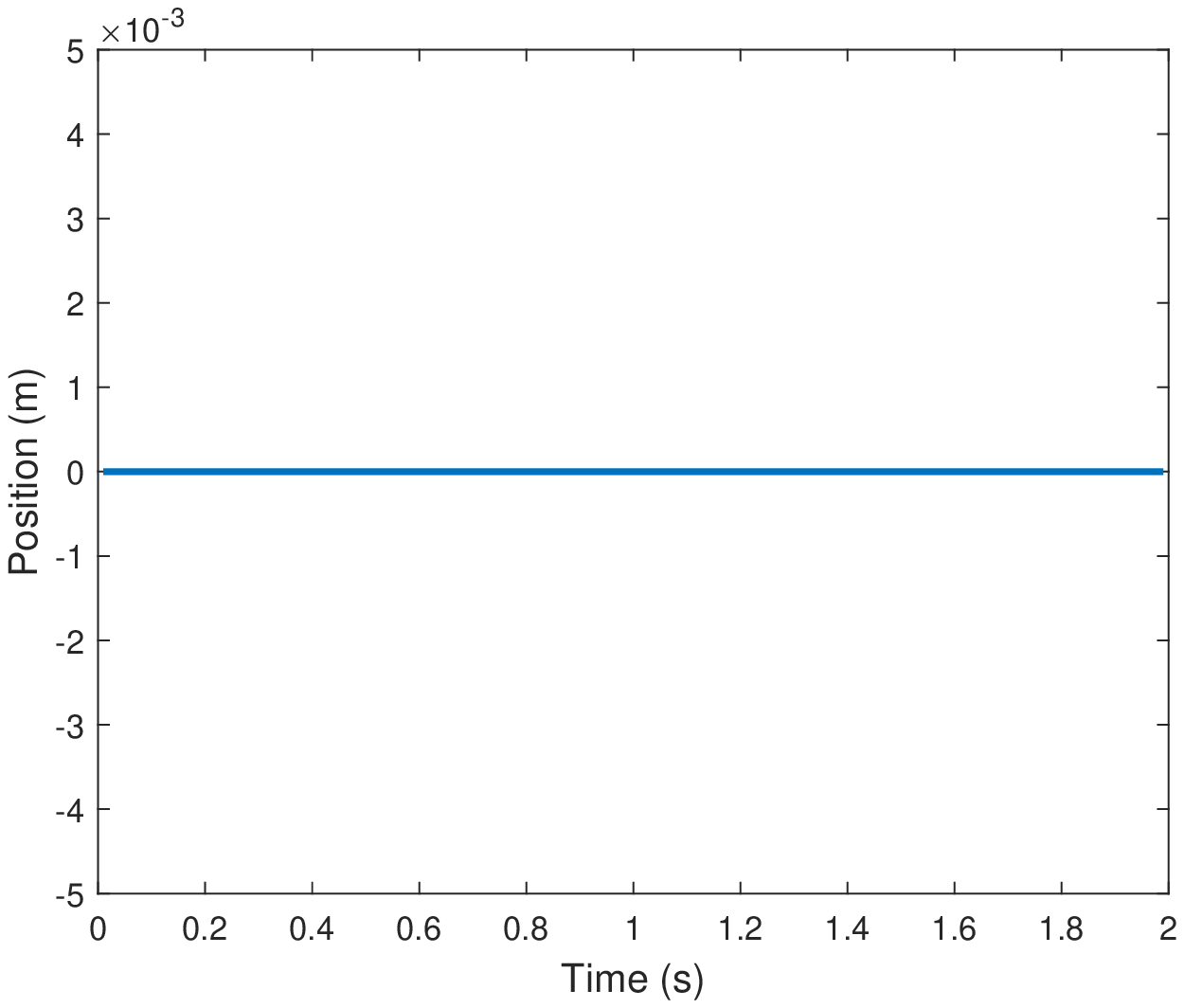}%
\caption{Distance $D$ is (numerically) zero during the motion showing that ECP is on contact line.}
\label{figure:ex4_error} 
\end{subfigure}%
\caption{Cylinder sliding and rotating on the plane.}
\label{Example2}
\end{figure*}

\begin{figure*}%
\centering
\begin{subfigure}{0.65\columnwidth}
\includegraphics[width=\columnwidth]{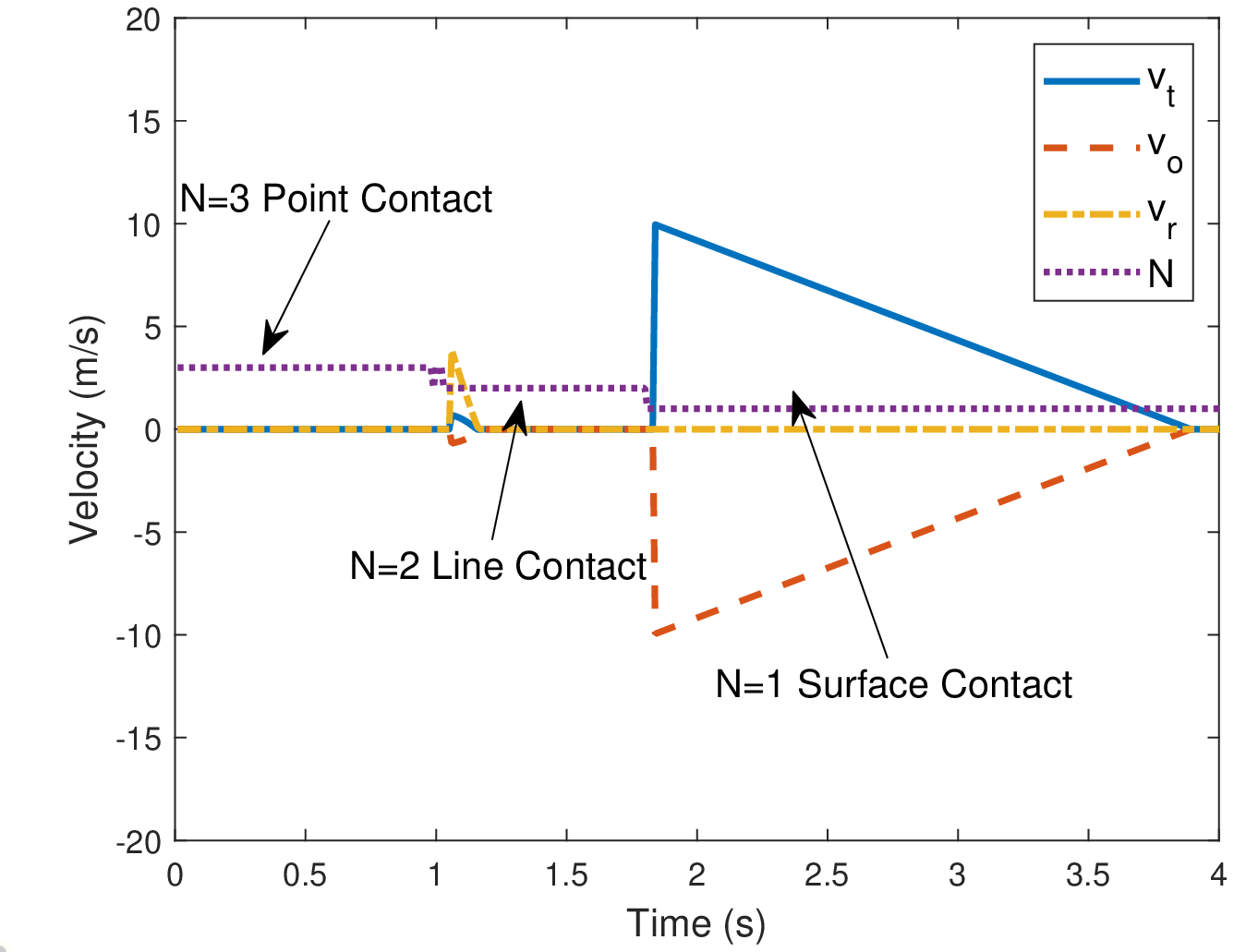}%
\caption{The sliding velocity of the ECP. }
\label{figure:ex3_sliding_velocity} 
\end{subfigure}\hfill%
\begin{subfigure}{0.65\columnwidth}
\includegraphics[width=\columnwidth]{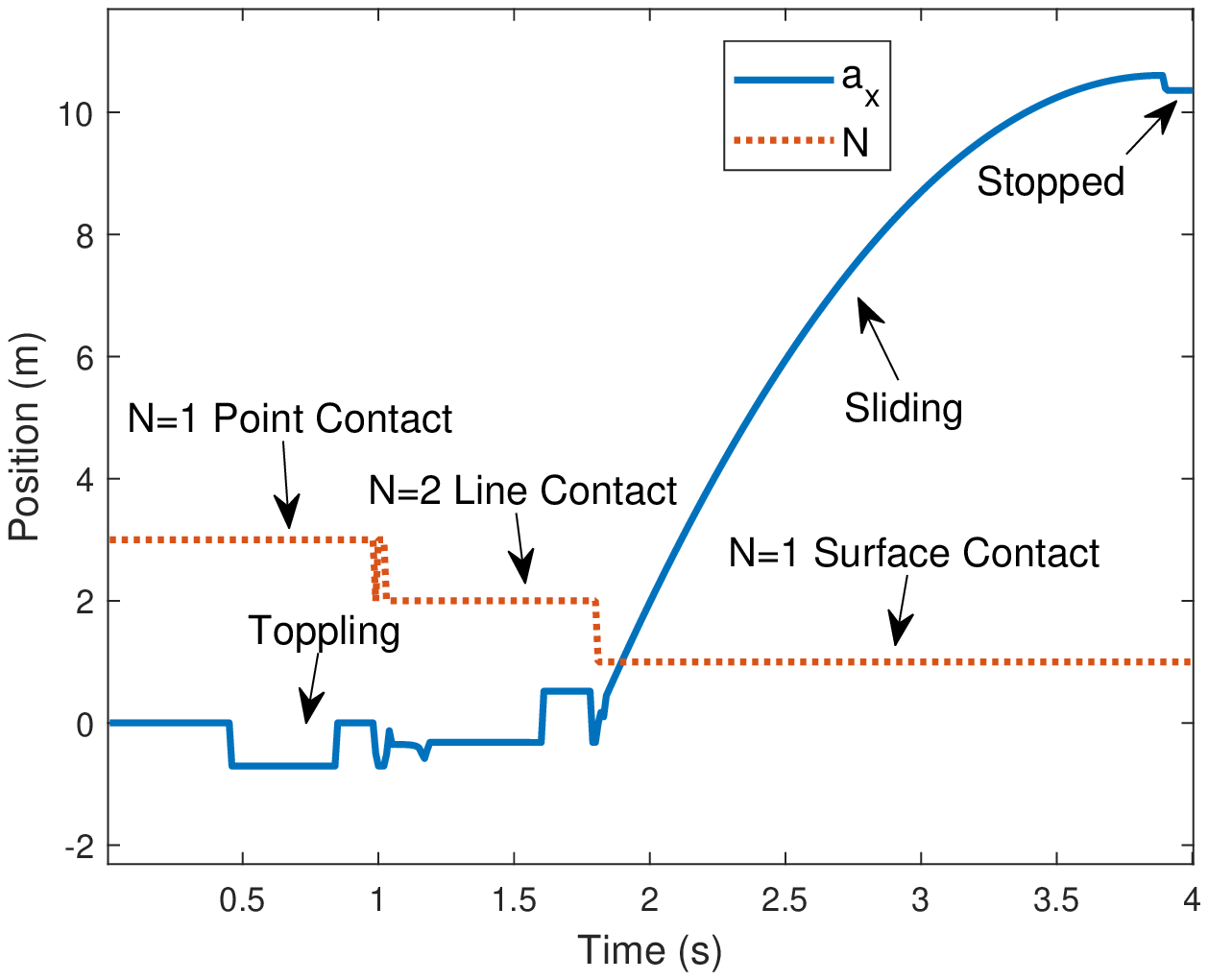}%
\caption{The x coordinate of the ECP $a_x$.  }
\label{figure:ex3_closest_point_x} 
\end{subfigure}\hfill%
\begin{subfigure}{0.65\columnwidth}
\includegraphics[width=\columnwidth]{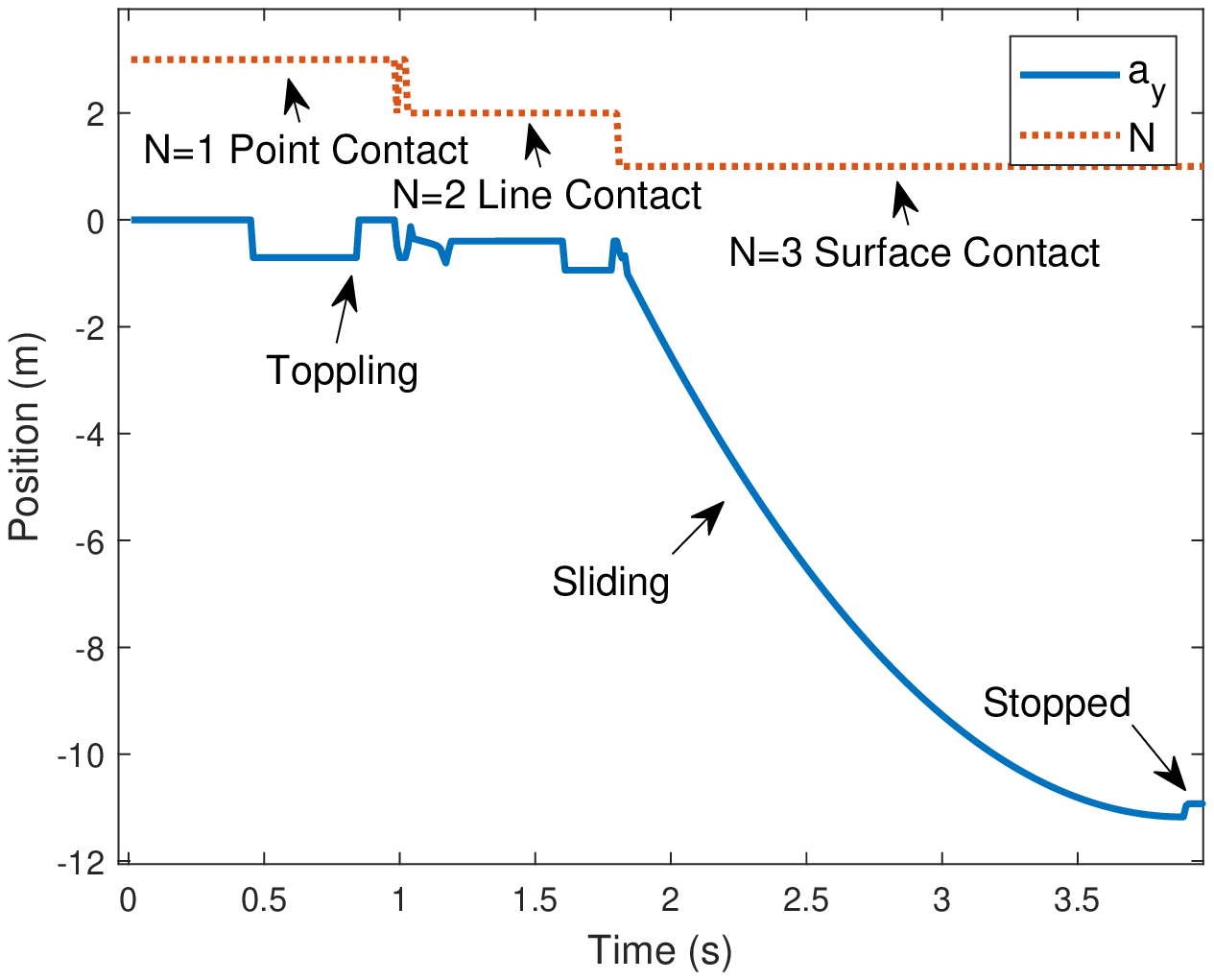}%
\caption{The y coordinate of the ECP $a_y$.}
\label{figure:ex3_closest_point_y} 
\end{subfigure}%
\caption{The motion of cube on the plane with transition among point, line, and surface contact}
\label{Example3}
\end{figure*}

When cylinder rolls and rotates on the plane with line contact, we show that the computed ECP stays on the contact line. We first project relative vector of ECP $\bf{r}$ onto the horizontal plane. Let $L$ be the relative location of ECP along the contact line:
$$L =\cos(^s\theta_z)(a_x-q_x)+\sin(^s\theta_z)(a_y-q_y),$$ 
where $^s\theta_z$ is the rotation angle about $\textbf{Z}^*$ axis. $D$ is the relative location of ECP which is orthogonal to  the contact line:
$$  D= \sin(^s\theta_z)(a_x-q_x)-\cos(^s\theta_z)(a_y-q_y)$$

Thus, ECP lies on the contact line if $ -l/2 \le L \le l/2 $ and $D = 0$. In Figure~\ref{figure:ex4_al} we show the variation of  $L$ with time. After around $t = 0.5$s, the trend changes periodically in the same pattern. Thus, we truncate the plot at $t = 2$s. From the plot, we note that $L$ is always smaller than $l/2$. When the angular velocity $^sw_z$ decreases to zero due to the friction force, $L$ also decreases to zero. When we provide impulse $p_{z\tau} = 3$Nms on the cylinder, $^sw_z$ increases, and $L$ also increases. Figure~\ref{figure:ex4_error} shows the trend of the distance $D$, which is truncated at $t = 2$s. We can see that the distance $D$ is always $0$. Note that, after $t = 2$s, the distance $D$ keeps to be $0$. Thus, we can conclude that the ECP is on the line of contact when the cylinder is rolling and rotating on the plane.

\subsection{Scenario 3: The cube toppling and sliding on the plane}
In this example we show that our method can automatically transition among point, line, and surface contact and solve for the ECP. We consider a unit cube starting with point contact with a plane. Let the coefficient of friction be $\mu = 0.2$ and cube's initial configuration be: 
${\bf q} = [0, 0, \frac{\sqrt{3}}{2}$ m$,\cos\left(\frac{\theta}{2}\right), \frac{1}{\sqrt{2}}\sin\left(\frac{\theta}{2}\right), -\frac{1}{\sqrt{2}}\sin\left(\frac{\theta}{2}\right), 0]^T$. $\theta = {\rm arctan}(\sqrt{2})$. The initial generalized velocity is $\bm{\nu} = [-\sqrt{6}/4 $  m$, -\sqrt{6}/4$ m$, 0, \sqrt{2}/2$  rad/s$, -\sqrt{2}/2$ rad/s$, 0]^T$. In this scenario, we set the simulation time to be 4s.

Figure~\ref{figure:ex3_sliding_velocity} shows the sliding velocities $v_t, v_o, v_r$ and $N$, the number of nonzero Lagrange multipliers denoting the number of facets of the cube that are in contact. In Equation~\eqref{equation:re_contact_multiple_3}, when one contact surface touches plane, its associated $l_i > 0$. When a cube has point contact with the plane, it has three facets which contact with plane, so $N = 3$. Similarly we have $N =2$ for line contact and $N =1$ for surface contact. 
In the beginning, the cube has point contact with plane. During motion, when the point contact becomes line contact ($t = 1$s), we provide an applied impulse, ${\bf P}_{app} = [\sqrt{2}/2$Ns $,-\sqrt{2}/2$Ns $,0,0.5$Nms $,0.5$Nms $,0]$ on the cube. Finally, when the cube has surface contact with plane ($t = 1.8$s), we provide an applied impulse ${\bf P}_{app} = [10$ Ns $,-10$Ns$,0,0,0,0]$ on the cube to let it slide on the plane. 
In Figure~\ref{figure:ex3_closest_point_x} and~\ref{figure:ex3_closest_point_y}, we plot $a_x$ , $a_y$ and $N$. During point contact and line contact, ECP suddenly changes several times, which means the cube oscillates between nearby points or lines but still keeps point or line contact.

\begin{figure*}[!htp]%
\begin{subfigure}{0.25\textwidth}
\includegraphics[width=\textwidth]{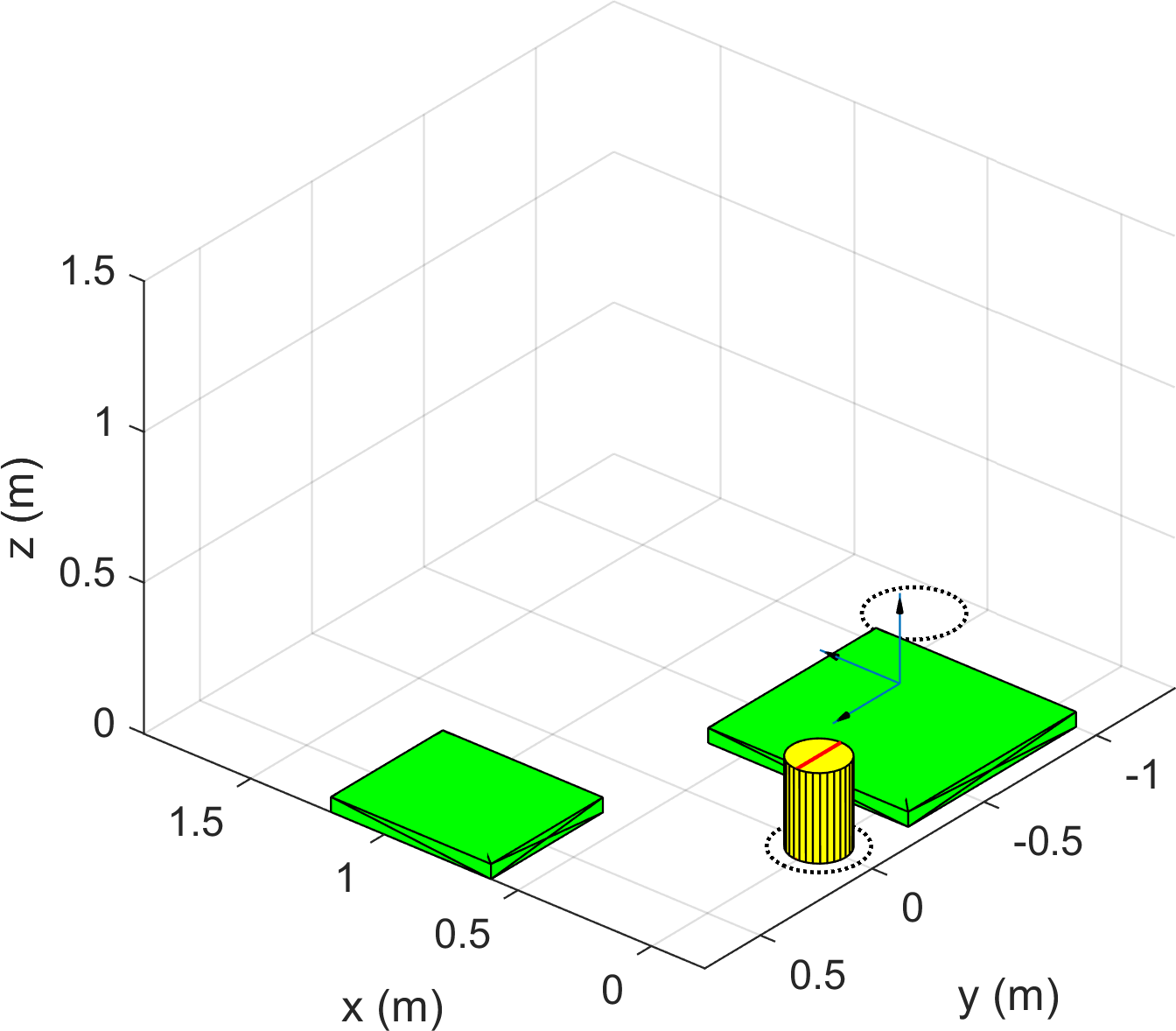}%
\caption{t = 0.01s ($\text{T}_1$). }
\label{figure:mani_1} 
\end{subfigure}\hfill%
\begin{subfigure}{0.25\textwidth}
\includegraphics[width=\textwidth]{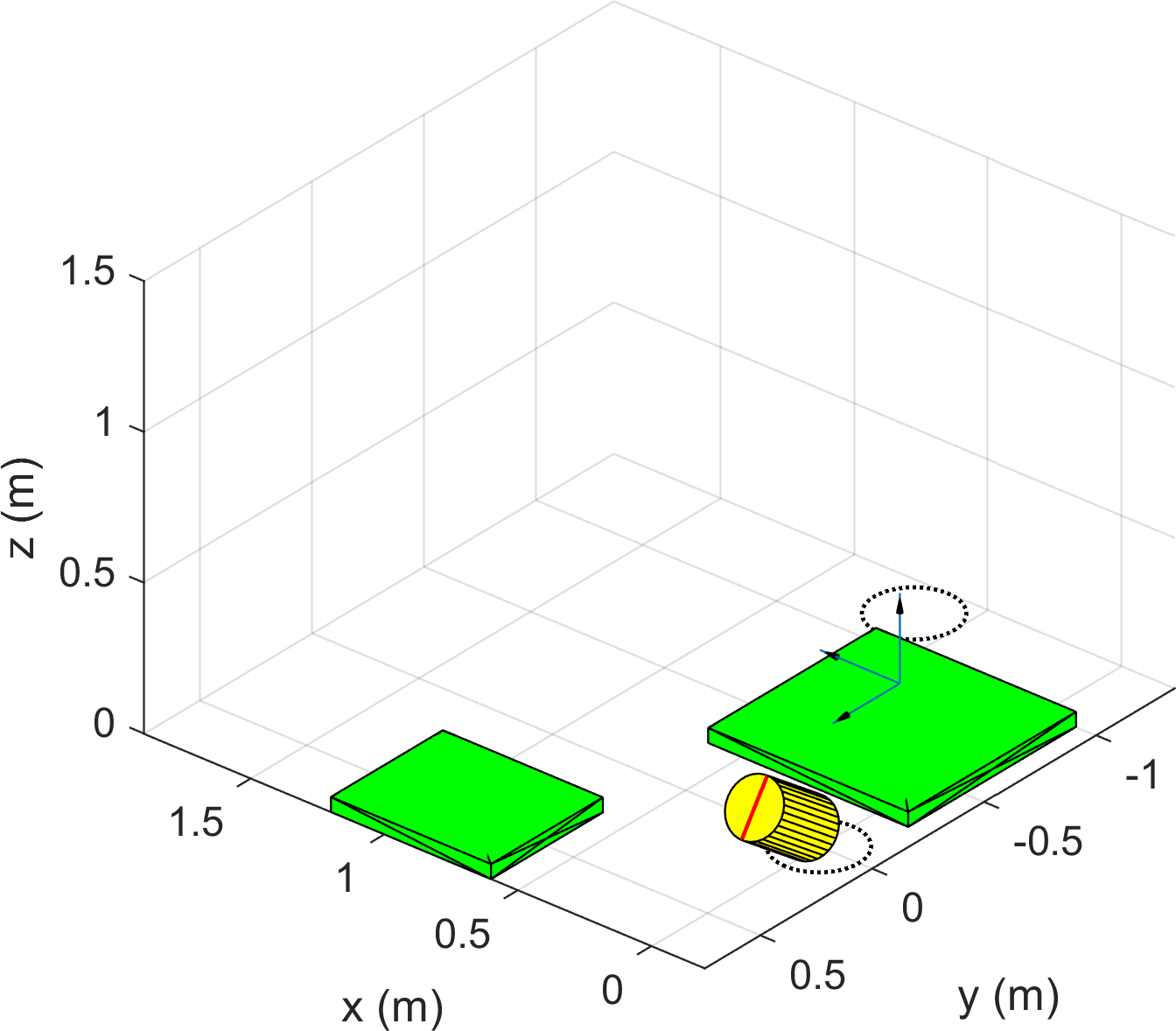}%
\caption{t = 0.51s ($\text{T}_1$). }
\label{figure:mani_2} 
\end{subfigure}\hfill%
\begin{subfigure}{0.25\textwidth}
\includegraphics[width=\textwidth]{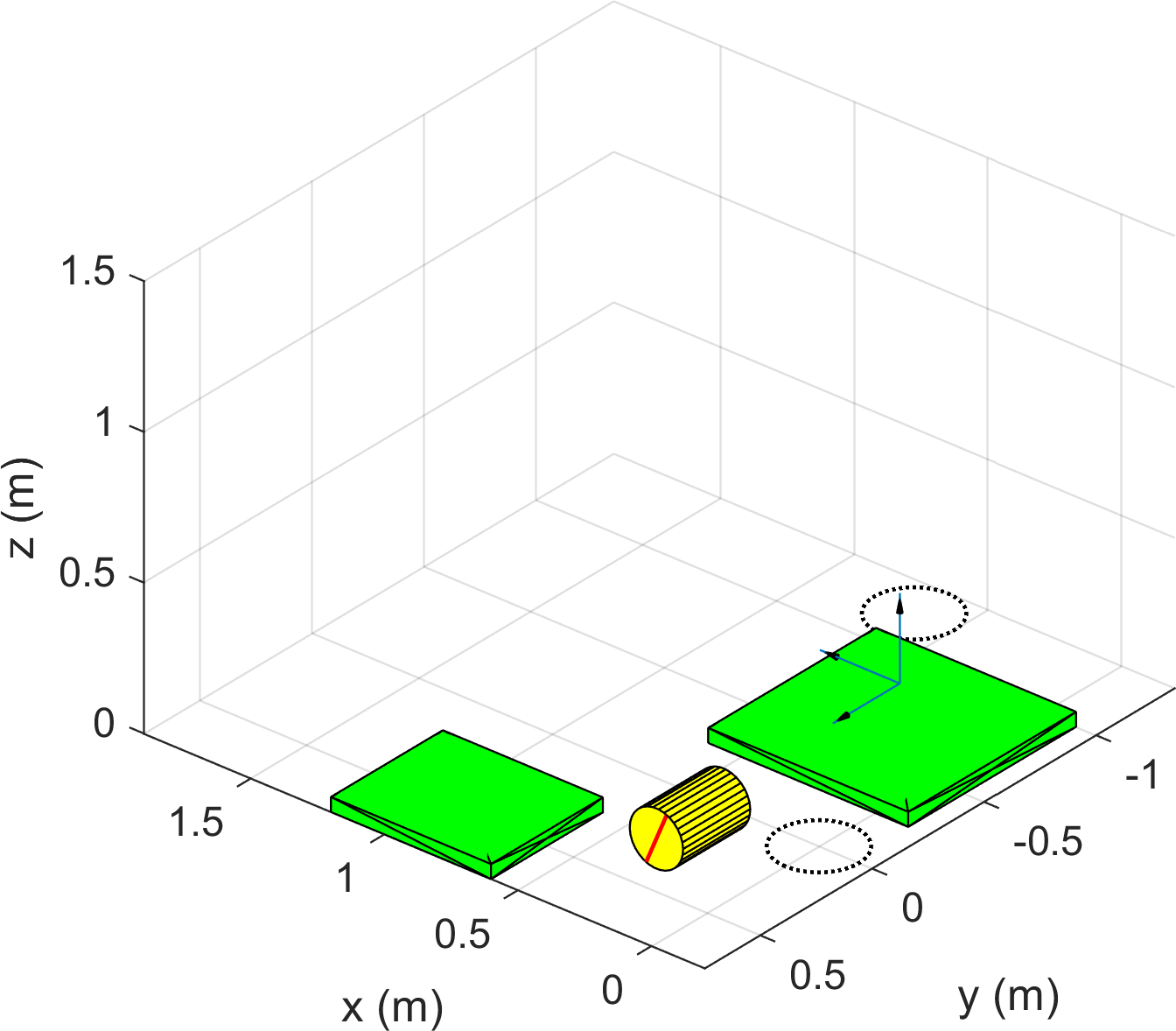}%
\caption{t = 1.31s ($\text{T}_2$).}
\label{figure:mani_3} 
\end{subfigure}\hfill%
\begin{subfigure}{0.25\textwidth}
\includegraphics[width=\textwidth]{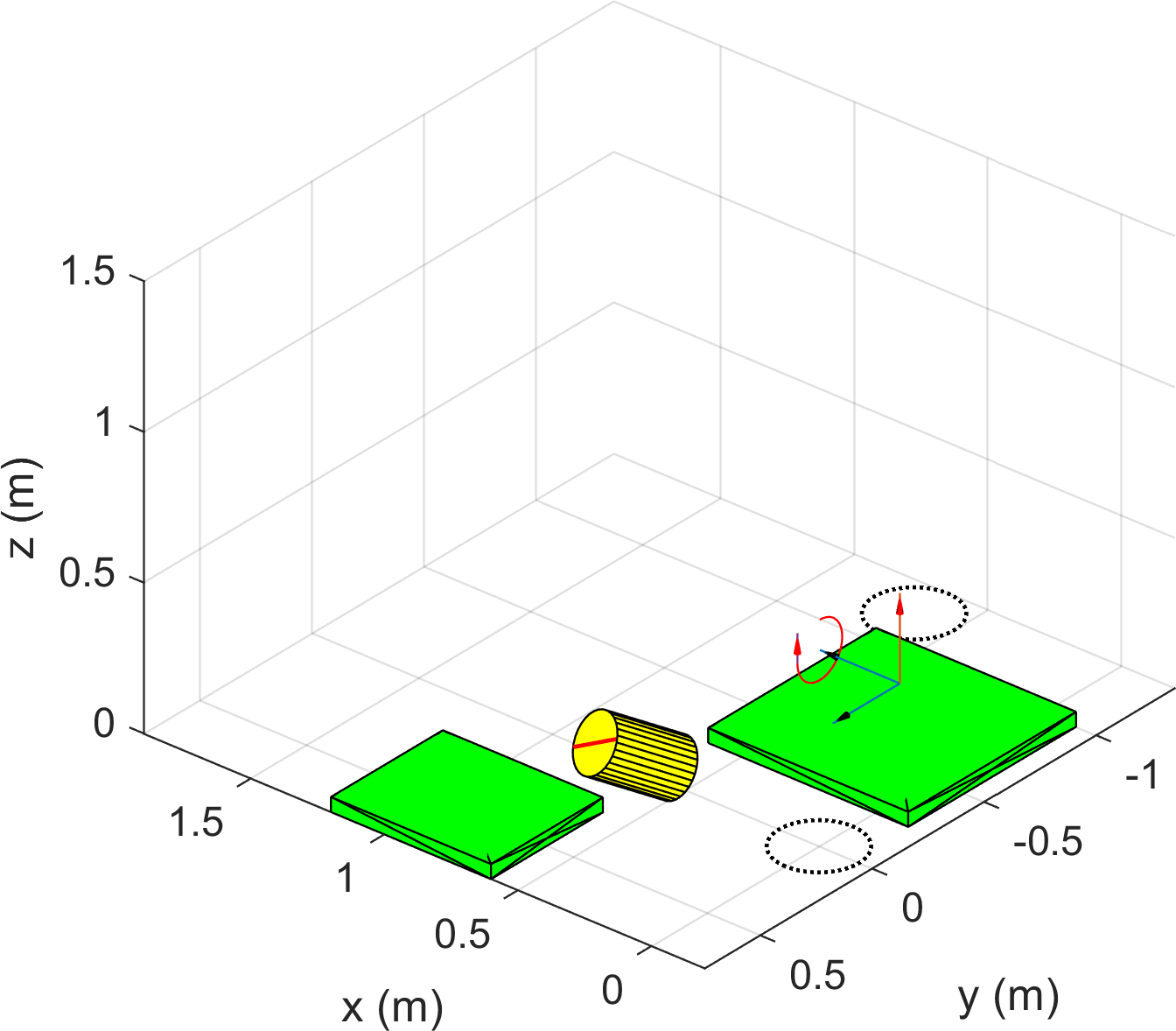}%
\caption{t = 1.81s ($\text{T}_3$).}
\label{figure:mani_4} 
\end{subfigure}\hfill%
\begin{subfigure}{0.25\textwidth}
\includegraphics[width=\textwidth]{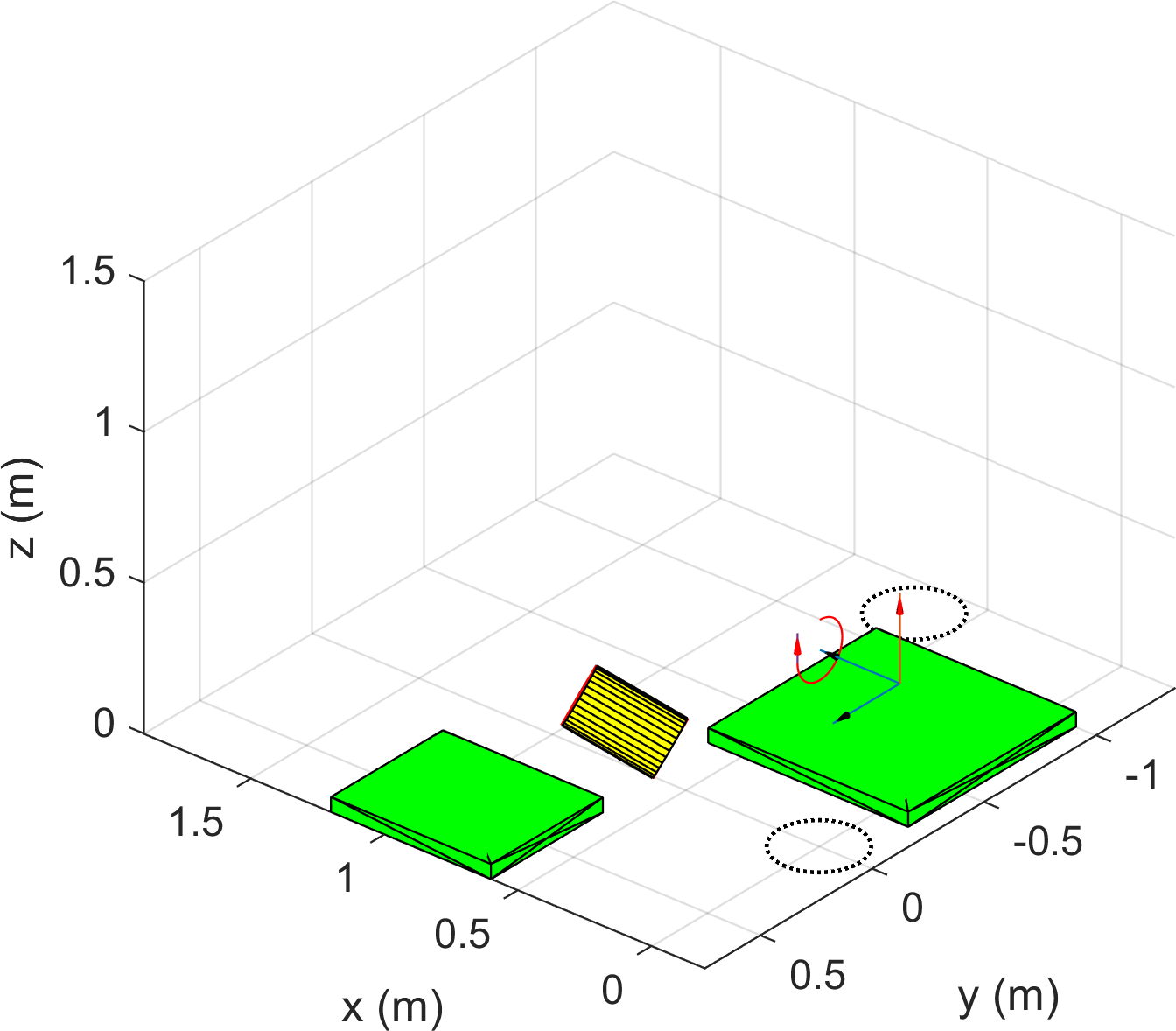}%
\caption{t = 2.01s ($\text{T}_3$).}
\label{figure:mani_5} 
\end{subfigure}\hfill%
\begin{subfigure}{0.25\textwidth}
\includegraphics[width=\textwidth]{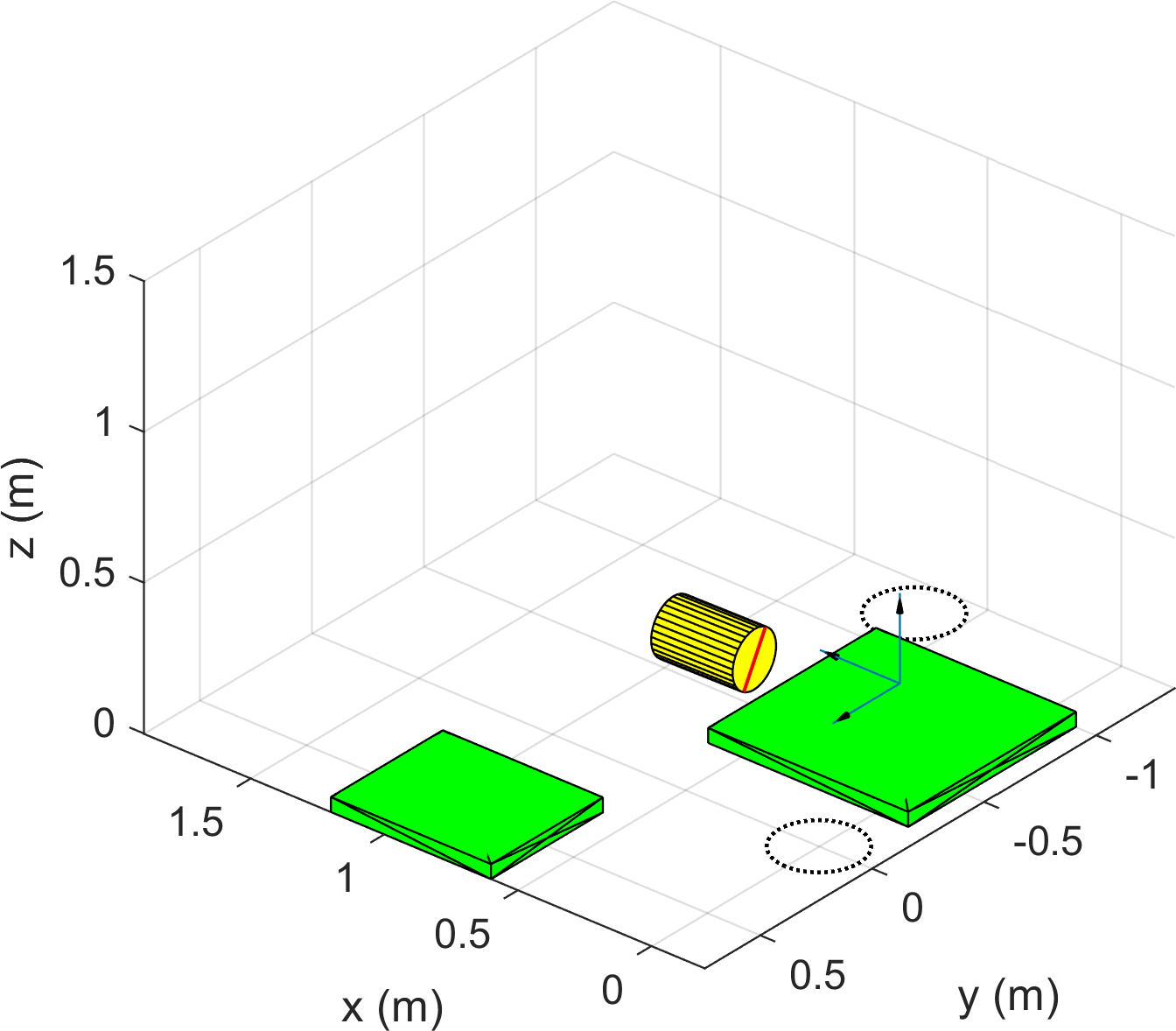}%
\caption{t = 2.51s ($\text{T}_4$).}
\label{figure:mani_6} 
\end{subfigure}\hfill%
\begin{subfigure}{0.25\textwidth}
\includegraphics[width=\textwidth]{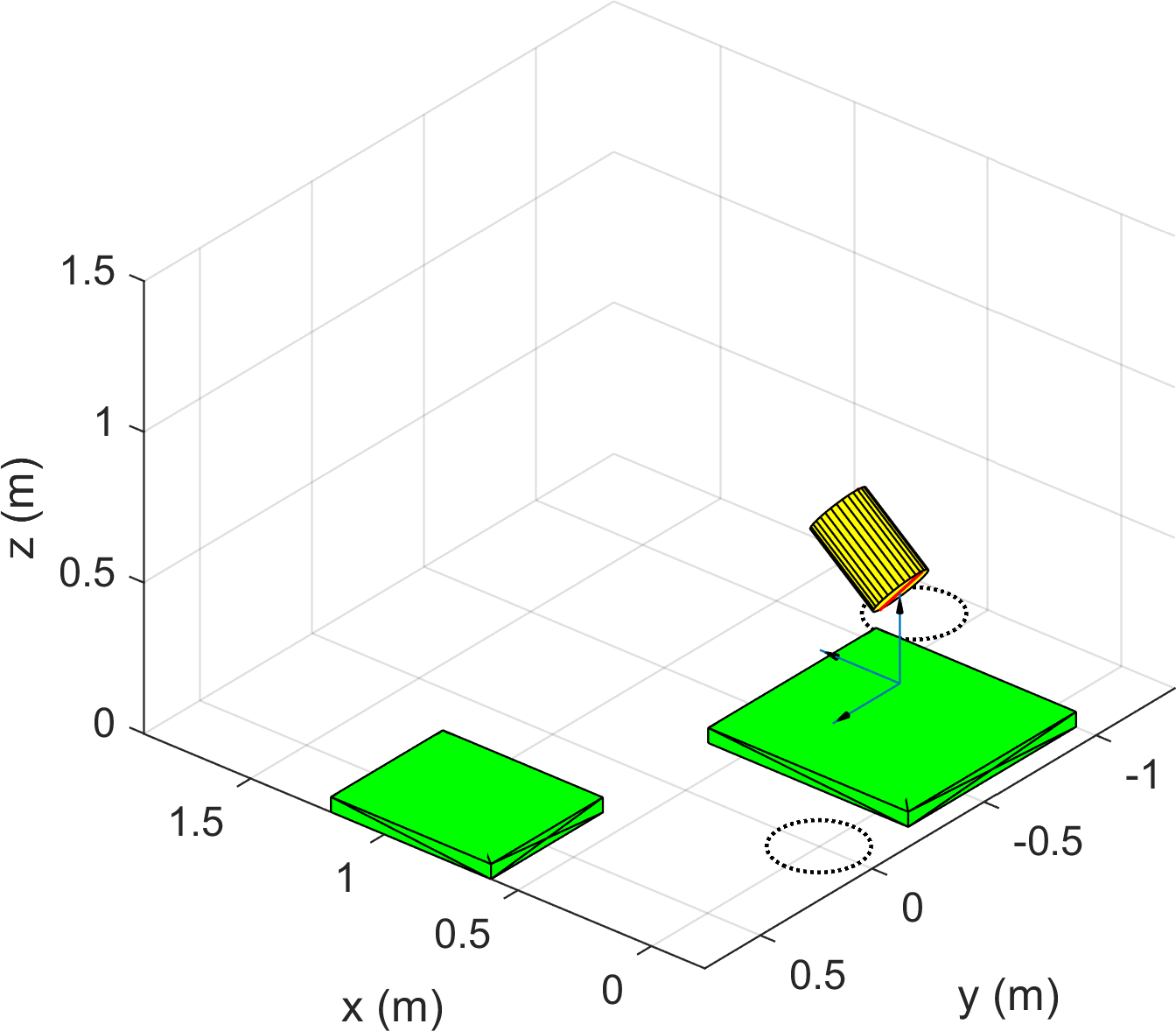}%
\caption{t = 3.51s ($\text{T}_5$).}
\label{figure:mani_7} 
\end{subfigure}\hfill%
\begin{subfigure}{0.25\textwidth}
\includegraphics[width=\textwidth]{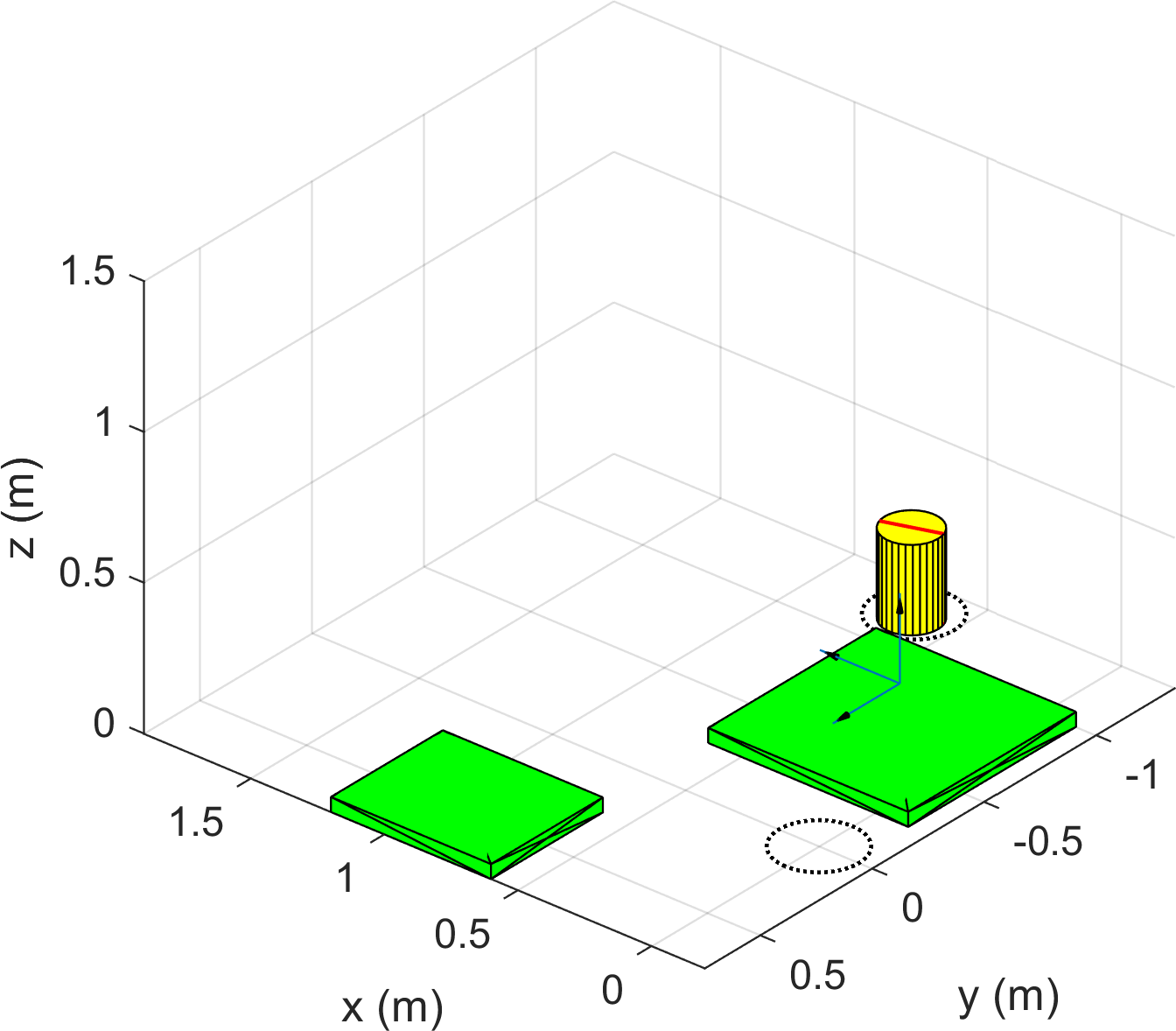}%
\caption{t = 5s ($\text{T}_6$).}
\label{figure:mani_8} 
\end{subfigure}
\caption{Simulations for the manipulation task of the cylindrical object showing transitions among point, line and surface contact. Note that $\text{T}_1$ to $\text{T}_6$ correspond to time periods in order during the manipulation. }
\label{Example_Manipulation_1}
\end{figure*}

\begin{figure*}[!htp]%
\begin{subfigure}{1\columnwidth}
\includegraphics[width=\columnwidth]{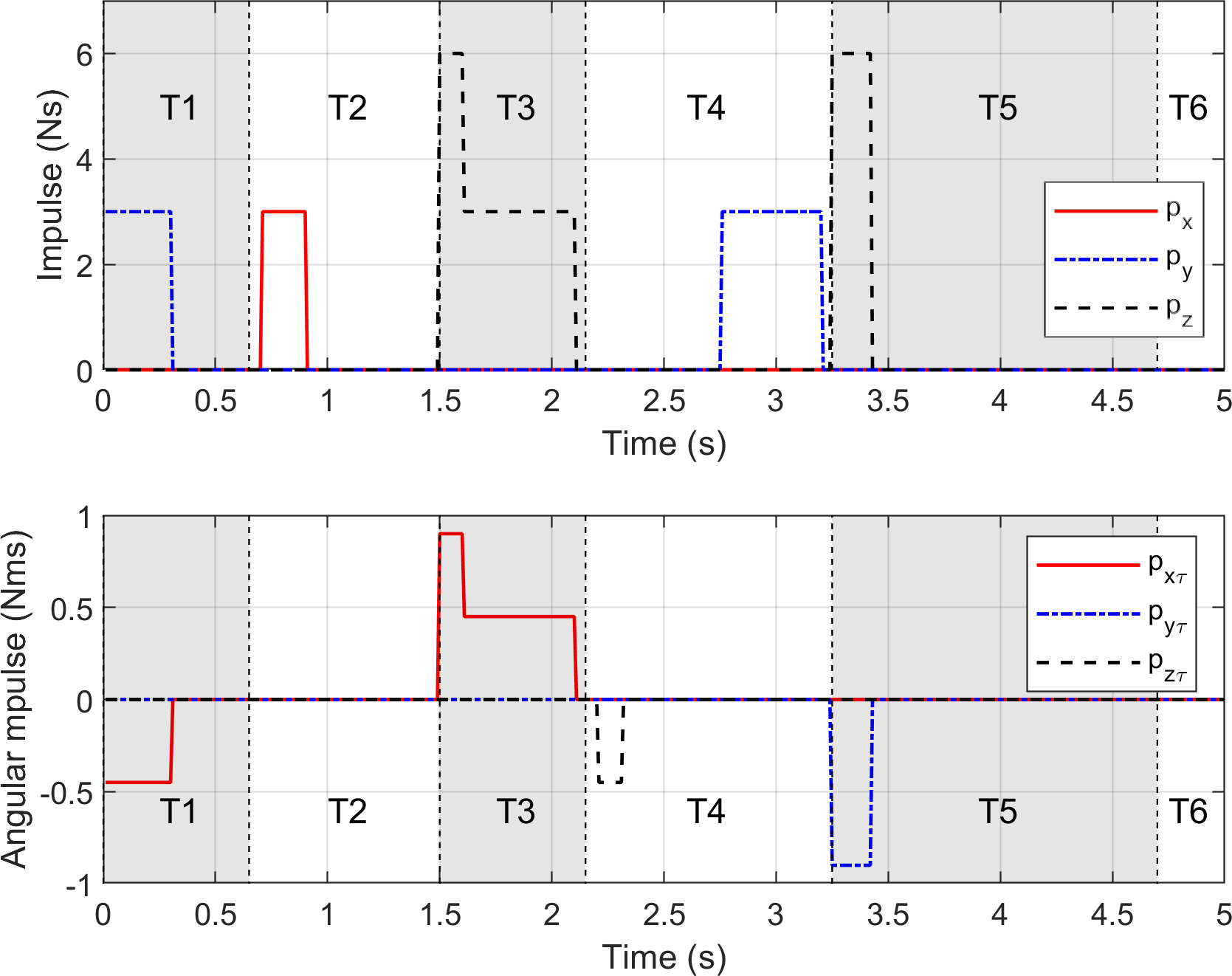}%
\caption{}
\label{figure:manpulation_impulse} 
\end{subfigure}\hfill%
\begin{subfigure}{1\columnwidth}
\includegraphics[width=\columnwidth]{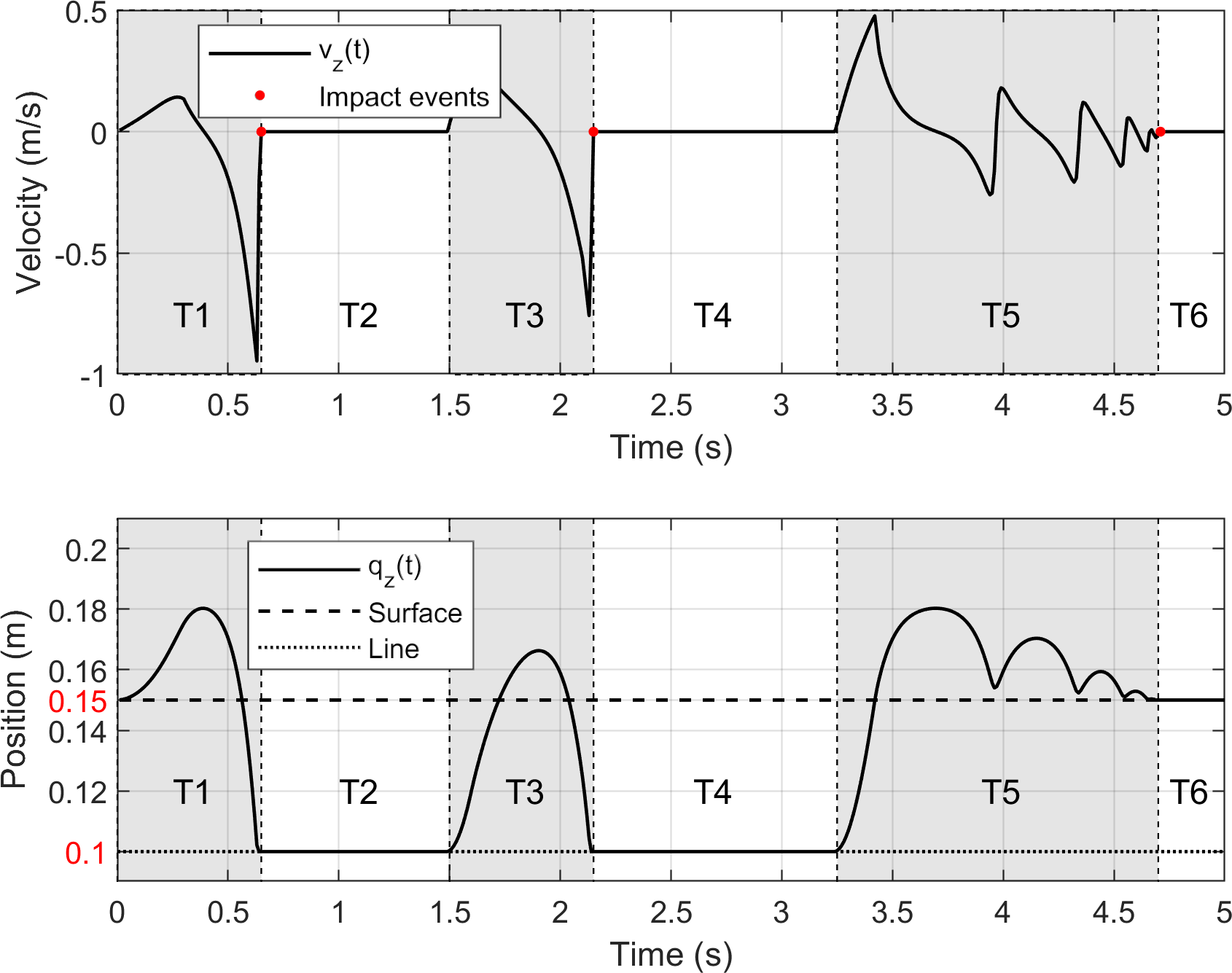}%
\caption{}
\label{figure:manpulation_trajectory} 
\end{subfigure}
\caption{(a) The time series of the applied impulses ${\bf p}_{app}(t) = [p_x(t),p_y(t),p_z(t),p_{x\tau}(t),p_{y\tau}(t),p_{z\tau}(t)]^T$ on the object. (b) The trajectory of velocity $v_z(t)$ and position $q_z(t)$ of the object's CM. In the plots, we divide the time trajectories into the time periods from $\text{T}_1$ to $\text{T}_6$. In the top of (b), when $v_z(t)$ drops to zero (dots in red), the object has inelastic contact with the plane. In the bottom of (b), the object has line contact when $q_z(t) = 0.1$m (dotted line). And it has surface contact when $q_z(t) = 0.15$m (dashed line). And the object has point contact when it transits between line and surface contacts. }
\label{Example_Manipulation_2} 
\end{figure*}
\subsection{Scenario 4: The manipulation task of cylindrical object}
This scenario illustrates the simulation for a manipulation task based on our method. This example is chosen to show that the actions of rolling and pivoting can be useful in the manipulation task. As shown in Figure~\ref{figure:mani_1}, initially a cylindrical object stands within a dashed circle on the plane. And there exists two cuboid shape obstacles which may block the motion of the object. Our goal is to move it to the goal configuration (shown in Figure~\ref{figure:mani_8}). The cylinder is made of steel with mass $m = 75$kg. The object's length is $0.3$m and the radius is $0.1$m. The plane is made of rubber, and the coefficient of friction between it and the object is $\mu = 0.8$. So the friction resistance is high if we directly push the object on the rough surface. And grasping is also hard since the object is heavy. Thus, the manipulations like pivoting and rolling is preferred.

Figure~\ref{Example_Manipulation_1} demonstrates the snapshots for the manipulation task.
Shown in Figure~\ref{figure:mani_1} and~\ref{figure:mani_2}, we first make the object fall down on the plane ($\text{T}_1$: from t = 0.01s to t = 0.65s). In Figure~\ref{figure:mani_3}, the object rolls forward with line contact ($\text{T}_2$: from t = 0.66s to t = 1.5s).  However, one of the obstacles (the cuboid shape in green) blocks the motion. As shown in Figures~\ref{figure:mani_4} and~\ref{figure:mani_5}, we make the object pivot and rotate about the contact point and it successfully passes through the obstacle ($\text{T}_3$: from t = 1.51s to t = 2.15s). In Figure~\ref{figure:mani_6}, we make the object rolls forward ($\text{T}_4$: from t = 2.16s to t = 3.25s). In Figure~\ref{figure:mani_7}, we lift the object up ($\text{T}_5$: from t = 3.26s to t = 4.7s). In Figure~\ref{figure:mani_8}, eventually the object reaches the goal configuration ($\text{T}_6$: from t = 4.71s to t = 5s).

The time series of applied impulses ${\bf p}_{app}(t)$ on the object is shown in Figure~\ref{figure:manpulation_impulse}. In Figure~\ref{figure:manpulation_trajectory}, we compare two plots of velocity $v_z(t)$ and position $q_z(t)$ of the object's CM. We show that the timings of jumps in $v_z(t)$ (identified by the red dots) correspond to the timings where $q_z(t)$ drops to constant value. Thus, we can conclude that the impact events happen when the contact mode changes. In our model, 
the collision is inelastic. Thus, the velocity component $v_z(t)$ goes to zero when impact happens. Note that both $v_z(t)$ and $q_z(t)$ vibrate multiple times during $\text{T}_5$ period. And it is reasonable since the object wobbles on the plane after it has been lifted up during $\text{T}_5$.

\section{CONCLUSIONS}
\label{sec:conc}
In this paper, we present a geometrically implicit time-stepping method for solving dynamic simulation problems with intermittent contact where the contact may be line or surface contact. In our geometrically implicit method we formulate a nonlinear mixed complementarity problem for each time step that allows us to solve the collision detection and numerical integration of the equations of motion simultaneously, instead of decoupling them (as is traditionally done). Decoupling of the collision detection from equations of motion makes the collision detection problem ill-posed for line or surface contact because there are infinitely many possibilities for contact point choice. Combining the collision detection with numerical integration allows us to solve for an {\em equivalent contact point} (ECP) on the contact patch as well as the contact wrenches simultaneously and makes the problem well-posed. We present numerical simulation results for some manipulation examples that demonstrate that our method can automatically handle and simulate through changes in point, line, and surface contact modes.
%
%


\bibliographystyle{IEEEtran}


\end{document}